%% file: iclr2026_conference.tex
\documentclass{article} 
\usepackage{iclr2026_conference,times}

\input{math_commands.tex}

\usepackage{hyperref}
\usepackage{url}
\usepackage{booktabs}
\usepackage{adjustbox}
\usepackage{algorithm}
\usepackage{algpseudocode}
\algrenewcommand\algorithmicrequire{\textbf{Input:}}
\algrenewcommand\algorithmicensure{\textbf{Output:}}
\usepackage{wrapfig}
\usepackage{caption}

\usepackage{amsmath}
\usepackage{amssymb}
\usepackage{mathtools}
\usepackage{amsthm}
\usepackage{arydshln}
\usepackage{enumitem}
\setlist{nosep}
\usepackage{lscape}
\usepackage{subcaption}

\title{LCA: Local Classifier Alignment for Continual Learning}

\author{
Tung Tran$^{1}$, Danilo Vasconcellos Vargas$^{1}$, Khoat Than$^{2}$\thanks{Corresponding author.} \\
$^{1}$Kyushu University, Fukuoka, Japan \quad
$^{2}$Hanoi University of Science and Technology, Hanoi, Vietnam \\
\texttt{tran.tung.son.949@s.kyushu-u.ac.jp} \\
\texttt{vargas@inf.kyushu-u.ac.jp} \\
\texttt{khoattq@soict.hust.edu.vn} 
\thanks{Code is available at: \href{https://github.com/tungts1101/LCA.git}{this https URL}}
}

\theoremstyle{plain}
\newtheorem{theorem}{Theorem}[section]

\newtheorem{corollary}{Corollary}
\theoremstyle{definition}

\theoremstyle{remark}
\newtheorem{remark}{Remark}

\iclrfinalcopy 
\begin{document}

\maketitle

\begin{abstract}
A fundamental requirement for intelligent systems is the ability to learn continuously under changing environments. However, models trained in this regime often suffer from catastrophic forgetting. Leveraging pre-trained models has recently emerged as a promising solution, since their generalized feature extractors enable faster and more robust adaptation. While some earlier works mitigate forgetting by fine-tuning only on the first task, this approach quickly deteriorates as the number of tasks grows and the data distributions diverge. More recent research instead seeks to consolidate task knowledge into a unified backbone, or adapting the backbone as new tasks arrive. However, such approaches may create a (potential) \textit{mismatch} between task-specific classifiers and the adapted backbone. To address this issue, we propose a novel \textit{Local Classifier Alignment} (LCA) loss to better align the classifier with backbone. Theoretically, we show that this LCA loss can enable the classifier to not only generalize well for all observed tasks, but also improve robustness. Furthermore, we develop a complete solution for continual learning, following the model merging approach and using LCA. Extensive experiments on several standard benchmarks demonstrate that our method often achieves leading performance, sometimes surpasses the state-of-the-art methods with a large margin.

\end{abstract}

\section{Introduction}

In real-world scenarios, data arrives continuously, requiring deployed models to adapt to evolving distributions while preserving previously learned knowledge. This challenge, known as the stability–plasticity dilemma in continual learning, captures the difficulty of balancing adaptation with retention. A widely adopted benchmark approximates this setting by partitioning a dataset into disjoint tasks and training a model sequentially without access to earlier data \citep{wang2024comprehensive}. At test time, the model must handle all tasks without being given the task identity. This setup is referred to as \textit{Class-Incremental Learning} (CIL) \citep{van2022three}.

Pre-trained models (PTMs) have recently emerged as a strong foundation for this setting, in contrast to earlier methods that trained networks from scratch \citep{li2017learning, kirkpatrick2017overcoming}. PTMs can be obtained through supervised or self-supervised training, and include large multimodal models such as CLIP \citep{radford2021learning} as well as vision backbones like the Vision Transformer \citep{dosovitskiy2020image}. Their broad generalization ability makes them effective feature extractors, requiring only lightweight adaptation and thereby reducing forgetting.

Nevertheless, naive sequential adaptation still leads to degradation on past tasks. Restricting updates to the first task avoids forgetting but blocks useful transfer to later ones. Since each task introduces unique information, the final model must capture both shared and task-specific components. A natural approach is to incrementally adapt on each task and then merge the resulting models into a unified backbone. Insights from Linear Mode Connectivity and the Lottery Ticket Hypothesis \citep{frankle2020linear} suggest that task-specific solutions can be connected through low-loss paths and rely on sparse, critical parameters, making them amenable to combination. Advances in model merging \citep{ilharco2022editing, yadav2023ties} further support this strategy, with both theoretical guarantees \citep{li2025task} and empirical evidence on adaptation tasks \citep{akiba2025evolutionary} showing that merged models can preserve and even enhance knowledge. Thus, it is reasonable to treat each task-specific model as a standalone expert containing complementary knowledge, whose consolidation yields a stronger overall backbone. However, merging backbones across tasks introduces a new challenge: classifiers trained independently may no longer align with the integrated backbone. Because these classifiers cannot be retrained without access to past data, even small parameter shifts can lead to severe drops in performance on earlier tasks. Addressing this misalignment is the central focus of our work.

Our contributions in this work are as follows:
\begin{itemize}
    \item We introduce a novel loss, \textit{Local Classifier Alignment} (LCA), for aligning CIL classifiers. This loss not only well aligns the backbone with the classifier, but also ensures robustness of the classifier. It also can contribute to reducing overlap between classes and hence improving classifier's performance.
    \item We provide theoretical analysis that decomposes the test error of a CIL model into three fundamental parts, including feature distribution shift, class-wise loss, and robustness. Those parts must be well controlled to assure a high performance for all observed tasks. Such a theory is crucial to support reliability of LCA and trustworthy CIL.
    \item We propose a complete CIL solution in which model merging (for PEFT parameters only) is used to slightly adapt the backbone to new tasks and LCA serves as the main alignment loss to reduce mismatches between classifiers and the backbone. In our method, each class is represented as a Gaussian in the feature space, and LCA jointly optimizes all classifiers.
    \item We conduct extensive experiments on seven benchmark datasets. Our results show that LCA consistently improves performance over baselines, enhances robustness under diverse scenarios, and can be integrated with other CIL methods to further boost their effectiveness. Figure~\ref{fig:lca-performance-comparison} shows superiority of LCA on seven benchmark datasets. 
\end{itemize}

\begin{wrapfigure}{r}{0.5\textwidth}   
 \vspace{-20pt}                           
\centering
\includegraphics[width=\linewidth]{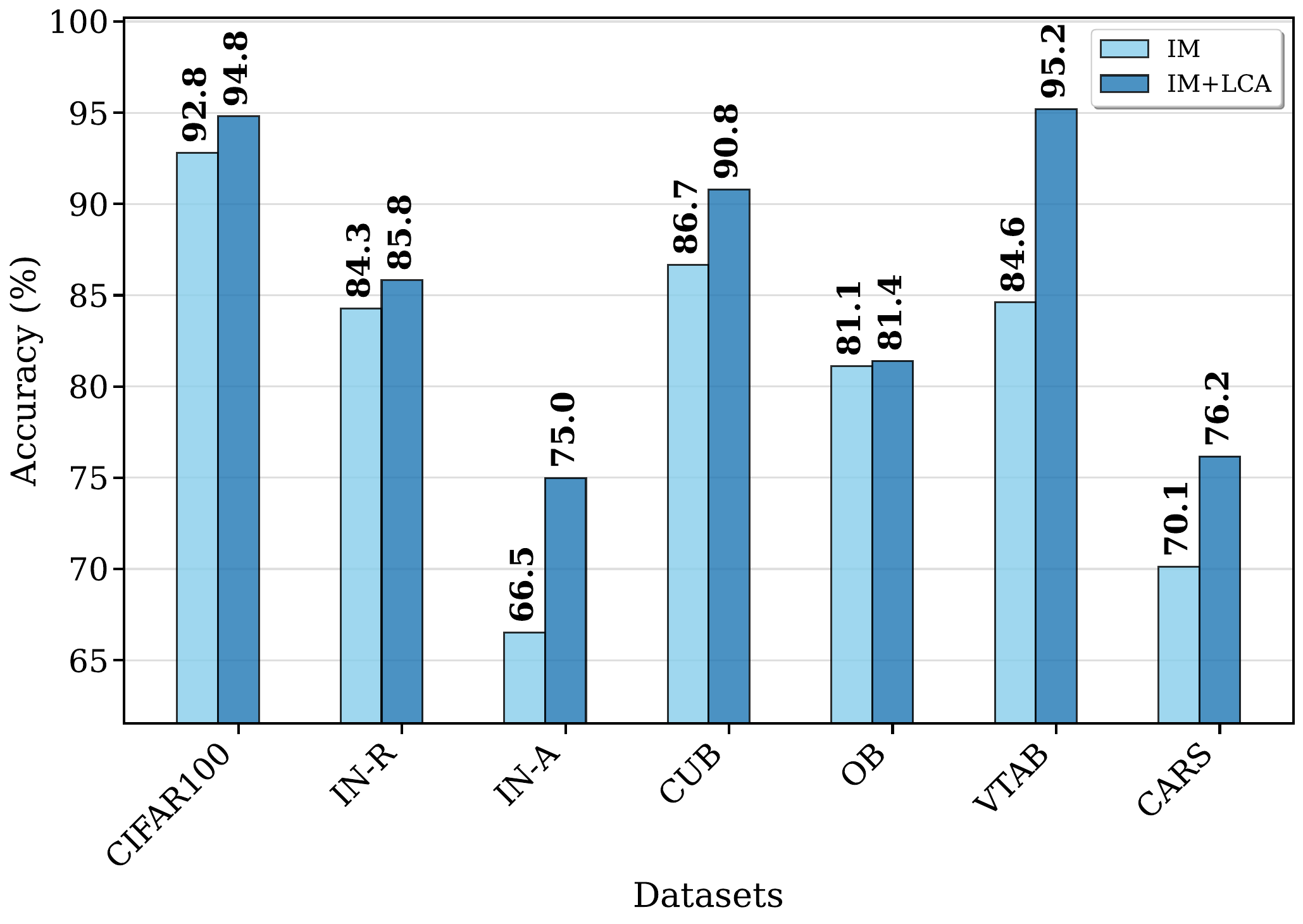}
\captionsetup{font=small}               
\caption{A comparison between IM and IM+LCA. IM is the result after only done the Incremental Merging step, while IM+LCA has Local Classifier Alignment as the last step.}
\label{fig:lca-performance-comparison}
\end{wrapfigure}

\section{Related Works}

\textbf{Representation-Based Methods.}  
This line of research leverages the representational power of large pre-trained models for continual learning \citep{wang2024comprehensive}. Trained on massive and diverse datasets, these models provide strong transferability and inherent robustness against forgetting. One prominent direction adapts prompting techniques from natural language processing, where prompts are modeled as learnable parameters attached to the inputs \citep{wang2022learning, wang2022dualprompt, tran2025boosting}. These prompts act as additional instructions that guide model predictions during continual learning. Another direction adapts the pre-trained backbone only once during the first task by using parameter-efficient fine-tuning (PEFT) modules and then relies on class prototypes for inference \citep{panos2023first, zhou2025revisiting, mcdonnell2023ranpac, perez2018film}. Such prototype-based approaches classify via cosine similarity can avoid forgetting because accumulated prototypes across tasks are tasks order-invariant. However, empirical evidence shows that adaptation only in the first task is insufficient, since data distributions in real-world scenarios vary substantially across tasks, which limits this method's applicability in long training horizons.

\textbf{Incremental Backbone Evolution.}  
A complementary line of research updates the backbone throughout the task sequence. For example, SLCA \citep{zhang2023slca} reduces forgetting by applying a smaller learning rate to the backbone than to the classifier, while methods such as MagMax \citep{marczak2024magmax}, EASE \citep{zhou2024expandable}, and MOS \citep{sun2025mos} focus on integrating task-specific components into a unified backbone. In all cases, past classifiers are typically frozen to avoid bias toward the current task’s data, which inevitably creates a mismatch between the evolving backbone and the fixed classifiers. To mitigate this issue, EASE reweights old classifiers using semantic similarity between new and old prototypes, while MOS dynamically selects suitable backbone adapters at inference time. Another line of research seeks to enrich past prototypes through augmentation, as in FeTrIL \citep{petit2023fetril}, PASS \citep{zhu2021prototype}, IL2A \citep{zhu2021class}, and CCFA \citep{kim2024cross}, or to enhance the capacity of current classifiers through boosting, as in FOSTER \citep{wang2022foster}. Our approach retrains all classifiers after the unification step using samples drawn from simple Gaussian models, together with a novel loss that emphasizes local robustness. This procedure reduces the mismatch between the backbone and the classifiers and improves the stability of the overall model.

\textbf{Model Merging.}  
Another related direction is model merging, which has recently gained attention for constructing a unified model from independently trained task-specific ones. Early methods such as FisherMerging \citep{matena2022merging} use the Fisher Information Matrix to weight parameter importance, while Task Arithmetic \citep{ilharco2022editing} represents task updates as vectors, enabling explicit addition or subtraction of knowledge. More advanced approaches like TIES-Merge \citep{yadav2023ties} address task interference by resolving sign conflicts across task vectors. Although these methods show strong performance in out-of-domain generalization \citep{rame2022diverse}, applying them directly to continual learning often introduces a mismatch between the merged backbone and fixed classifiers, and incurs storage overhead since parameters from all past tasks must be retained. In contrast, our approach builds on the spirit of model merging by incrementally consolidating PEFT modules, which reduces memory requirements, and coupling this with a continuous training scheme that solidifies accumulated knowledge across tasks.

\section{Methodology}
This section presents our methodology for continual learning, which consists of two complementary components. The first focuses on incrementally merging task-specific backbones into a unified model, while maintaining proximity across tasks by initializing new training from the latest merged backbone. The second addresses the mismatch that arises when frozen task-specific classifiers interact with the consolidated backbone, introducing an alignment mechanism based on class-wise local regions. Together, these components enable the model to effectively retain past knowledge while adapting to new tasks.

\subsection{Problem Formulation}
In class-incremental learning (CIL), the objective is to train a single model sequentially on a series of (potentially infinite number of) tasks. Each task \(i\) is associated with a dataset
\[
\mathcal{D}_i = \{(x, y) \mid x \in \mathcal{X}_i, \; y \in \mathcal{Y}_i\},
\]
where \(x\) denotes an input sample and \(y\) its corresponding label. Here, \(\mathcal{X}_i\) represents the input space and \(\mathcal{Y}_i\) the label space for task \(i\). A defining characteristic of the CIL setting is that the label spaces of different tasks are strictly disjoint:
\[
\forall i \neq j, \quad \mathcal{Y}_i \cap \mathcal{Y}_j = \emptyset.
\]
This assumption implies that each new task introduces a set of novel classes that the model has not encountered before. Consequently, the model must continuously expand its knowledge while preserving performance on previously learned classes, making the prevention of catastrophic forgetting a central challenge in CIL.

There are different strategies for constructing classifiers in continual learning, such as Nearest Class Mean (NCM) classifiers, which assign labels by comparing test features to stored class prototypes. In this work, however, we focus on the more general and widely used setup of progressively expanding Multi-Layer Perceptrons (MLPs) on top of the feature extractor. Instead of training a single unified classifier over all classes, a new MLP head is added for each task, with its output dimension matching the number of classes introduced by that task. This approach not only reduces storage but also mitigates forgetting, since earlier classifiers remain frozen and are not modified during subsequent training. After \(t\) tasks, the classifier consists of a set of task-specific heads \(\{\theta^{\text{cls}}_1, \theta^{\text{cls}}_2, \ldots, \theta^{\text{cls}}_t\}\), and inference is performed by evaluating each head separately and concatenating their outputs:
\[
h(x) = \text{concat}\big(h(x; \theta^{\text{cls}}_1), \; h(x; \theta^{\text{cls}}_2), \; \ldots, \; h(x; \theta^{\text{cls}}_t)\big).
\]

\subsection{Incremental Knowledge Consolidation}
Although pre-trained models possess strong generalization, they still lack the domain-specific knowledge needed to serve as effective feature extractors. As a result, a fine-tuning stage is required. To prevent forgetting during continuous fine-tuning, early methods often assume that task distributions remain close and restrict adaptation to the first task. However, as the number of tasks grows and their distributions diverge, performance on earlier tasks inevitably deteriorates. Inspired by research on model merging, we propose an incremental integration scheme that unifies task-specific backbones into a single consolidated backbone.

We finetune the model—consisting of the pretrained backbone, the PEFT module, and the task classifier—parameterized by $\theta_{\text{pretrained}}, \theta_{\text{peft}_i}, \theta_{\text{cls}_i}$ on task dataset $\mathcal{D}_i$. To limit drift across tasks, each task is initialized from the most recent PEFT parameters $\theta_{\text{peft}_{i-1}}$ rather than the original base, keeping successive solutions close in parameter space—a property shown to be important for stable merging~\citep{li2025task}. The finetuning step $\texttt{finetune}$ uses SGD with cross-entropy loss.

After finetuning task $i$, we flatten the PEFT parameters into a vector $\tau_{\text{peft}_i}$. Assuming that final parameters should remain close to the pretrained initialization, we subtract the base vector $\tau_{\text{peft}_0}$ and select elements based on the largest absolute deviations. To avoid parameter growth, we retain only the previously merged vector and the current task vector during each merging step. The resulting update vector $\tau$ is added back to the base parameters to obtain the merged PEFT module for task $i$, as summarized in Algorithm~\ref{alg:incremental-merge}. 

We further include an ablation in Appendix~\ref{abl:different-merging-operators}, comparing Min, Max, and MaxAbs selection rules. All operators yield stable and competitive performance when applied to PEFT modules, reinforcing that merging only PEFT parameters is robust. To our knowledge, no prior work performs parameter-value-based merging without any trimming phase while still achieving effective results.

\begin{algorithm}[t]
\caption{Incremental Merging (IM)}
\label{alg:incremental-merge}
\begin{algorithmic}[1]
\Require Datasets $\{\mathcal{D}_1,\ldots,\mathcal{D}_T\}$, pretrained model $\theta_{\text{pretrained}}$, base PEFT $\theta_{\text{peft}_0}$
\Ensure  Merged PEFT module $\theta_{\text{merged}}$
\State $\tau \gets \mathbf{0}$ \Comment{Task vector accumulator}
\For{$i = 1$ to $T$} \Comment{Sequentially process tasks}
  \State $\theta_{\text{peft}_i} \gets \texttt{finetune}(\theta_{\text{peft}_{i-1}}, \mathcal{D}_i)$ \Comment{Task-$i$ adaptation}
  \State $\tau_{\text{curr}} \gets \theta_{\text{peft}_i} - \theta_{\text{peft}_0}$ \Comment{Task vector}
  \For{$k = 1$ to $d$} \Comment{$d$ = number of parameters}
    \If{$|\tau_{\text{curr}}^{(k)}| \ge |\tau^{(k)}|$}
      \State $\tau^{(k)} \gets \tau_{\text{curr}}^{(k)}$ \Comment{Keep larger magnitude, preserve sign}
    \EndIf
  \EndFor
  \State $\theta_{\text{merged}} \gets \theta_{\text{peft}_0} + \alpha \cdot \tau$ \Comment{Update merged PEFT}
\EndFor
\end{algorithmic}
\end{algorithm}

\subsection{Local Classifier Alignment}

In contrast with a generalized feature extractor, task-specific classifiers should remain well separated to achieve strong classification performance. Because datasets from previous tasks are unavailable, updating earlier classifiers during new training would move them away from their previously optimized values. The standard pipeline therefore trains and merges the backbone while freezing the old classifiers, which introduces a mismatch between the unified backbone and those fixed heads. We address this discrepancy with an alignment step that bridges the two components.

Consider the CIL approach where we use a pretrained model such as VIT to produce high-quality embeddings of the input samples, a Gaussian distribution ($\gN$ or prototype) to represent a class, and a classifier. This approach has been investigated heavily and often high-performing CIL methods.
At each time step $t$, one can use a learning method to train a classifier $h_{t}$ based on the classifier $h_{t-1}$ which already fitted for prior tasks and a dataset $\mD_t$ for the current task. The overall performance of $h_{t}$ depends heavily on the employed learning algorithm. A good algorithm can well align the prototypes and classifier. However, this might not  be always the case, and hence can degrade the overall CIL performance.

We propose a simple finetuning step, called \textit{Local Classifier Alignment (LCA)}, to better align the classifier and prototypes. Specifically, LCA minimizes the following loss
\begin{eqnarray}
\label{LCA-loss}
L(\mD, h_t) &=& \frac{1}{C_t} \sum_{i=1}^{C_t} L_i \\
L_i &=& \mathbb{E}_{\vz \sim \mD_i}\left[\ell(h_t,\vz)\right] + \lambda \mathbb{E}_{\vz,\vz'{\sim} \mD_i} \left[\big|\ell(h_t,\vz)-\ell(h_t,\vz')\big| \right]
\end{eqnarray}
 where $\mD_i$ contains i.i.d. samples from the Gaussian distribution  $\mathcal{N}_i$ that represents class $i$ in the feature space induced by the backbone,  $\mD = \{\mD_1, ..., \mD_t\}$, and $C_t$ is the total number of observed classes upto task $t$.

Basically, LCA tries to simultaneously minimize the loss for each class and keep the loss less sensitive to a small change in the input samples around the class prototypes. The class loss can be seen from the first term $\mathbb{E}_{\vz \sim \mD_i}\!\left[\ell(h_{t},\vz)\right]$, while the sensitivity comes from the second term in each $L_i$. Such a term can be seen as a regularizer to penalize the classifier for unstable predictions. A larger value for $\lambda$ suggests a stronger penalty for sensitivity of the loss. 

It is worth noting that the novelty of LCA comes from the regularization term. It not only helps improve robustness of the classifier, but also can reduce overlapping between classes. Indeed, when using the first term as the main objective for training the classifier, some samples randomly generated by $\mathcal{N}_i$ of one class can lie far from the $i$-th class prototype and hence can be closer to some other class prototypes. This can harm the training for the classifier. The second term can help us reduce the negative effect from those potentially harmful samples, under a suitable choice of $\lambda$.

\subsection{Theoretical analysis}
\label{sec:theoretical-analysis}
We next analyze the generalization ability of the classifier $h_{t}$. Until time step $t$, the classifier $h_{t}$ already has learned from $C_t$ classes. We want to estimate its test error for those learned tasks. To this end, the classical tradition is to estimate $ L(P, h_t) = \frac{1}{C_t} \sum_{i=1}^{C_t}  \mathbb{E}_{\vz \sim \gN_i} \left[\ell(h_t,\vz)\right]$, which represents the overall expected error for the learned tasks.

Let $\bigcup_{i=1}^t \gZ_i$ be the decomposition of the data space into non-overlapping local areas, with $\gZ_i$ as the local area with centroid $\mu_i$, where $\mu_i$ is the mean of the Gaussian distribution $\gN_i$, for each index $i$. We have the following bound for the expected error of the overall classifier up to time step $t$, whose proof appears in Appendix~\ref{appendix:proof}.

\begin{theorem} \label{thm-LCA-generalization}
Consider a model $h_t$ learned from a dataset $\mD = \{\mD_1, ..., \mD_t\}$, where $\mD_i$ contains $n_i$ i.i.d. samples from distribution $\gN_i$ for each $i \le C_t$, and a bounded loss $\ell$.  Denote $P = \frac{1}{C_t} \sum_{i=1}^{C_t} \gN_i$ as the overall distribution, $n = \sum_{i=1}^{C_t} n_i$, $\ell_{\max} = \sup_{\vz} \ell(h_t, \vz)$,   and $\bar{\epsilon}_i(h_t) = \E_{\vz \sim \gN_i, \vs \sim \mD_i}[|\ell(h_t,\vz) -  \ell(h_t,\vs) | : \vz, \vs \in \gZ_i]  $ for each index $i$. For any $\delta >0$, the following holds with probability at least $1-\delta$: 
 \begin{equation}
 \label{eq-thm-Local-Sensitivity-generalization}
   L(P, h_t) \le  L(\mD, h_t) +   \sum_{i=1}^{C_t}  \frac{n_i}{n} \bar{\epsilon}_i(h_t)    +  \ell_{\max} \sqrt{\frac{C_t\ln4 + 2\ln(1/\delta)}{n}}
\end{equation}
\end{theorem}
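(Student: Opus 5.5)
The bound has the shape of the algorithmic-robustness bounds of Xu and Mannor: a partition-dependent local sensitivity term plus a concentration term for the empirical cell frequencies. The constant $\sqrt{(C_t\ln 4+2\ln(1/\delta))/n}$ is exactly the Bretagnolle--Huber--Carol inequality for a multinomial with $K=C_t$ cells. So the plan is to decompose the risk cell by cell over the partition $\bigcup_i \gZ_i$ and control the two resulting pieces separately.

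\textbf{Step 1 (setup).} Regard the pooled sample $\mD$ as $n$ draws whose cell memberships follow a multinomial with cell probabilities $p_i = P(\gZ_i)$. I identify the samples landing in $\gZ_i$ with $\mD_i$, so the empirical count of cell $i$ is $n_i$; this is the point where regions are tied to classes via the centroids $\mu_i$. Then write
\[
L(P,h_t)=\sum_{i=1}^{C_t} p_i\,\E_{\vz\sim P}[\ell(h_t,\vz)\mid \vz\in\gZ_i].
\]
Define the empirical risk $\hat L = \sum_i \frac{n_i}{n}\,\frac{1}{n_i}\sum_{\vs\in\mD_i}\ell(h_t,\vs)$. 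I will use $\hat L \le L(\mD,h_t)$. The reason is that the LCA regularizer in each $L_i$ is nonnegative, so dropping it only decreases the loss. The per-class averages must also be matched to the weights $n_i/n$; this is immediate if the $n_i$ are equal, and otherwise I read $L(\mD,h_t)$ with sample-proportional weights.

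\textbf{Step 2 (decomposition).} Add and subtract $\sum_i \frac{n_i}{n}\E[\ell\mid\gZ_i]$:
\[
L(P,h_t)-\hat L=\sum_i \frac{n_i}{n}\Big(\E[\ell(h_t,\vz)\mid\gZ_i]-\frac{1}{n_i}\sum_{\vs\in\mD_i}\ell(h_t,\vs)\Big)+\sum_i \E[\ell\mid\gZ_i]\Big(p_i-\frac{n_i}{n}\Big).
\]
For the first sum, write the inner difference as an average over $\vs$ of $\E_{\vz}[\ell(h_t,\vz)-\ell(h_t,\vs)]$ with both $\vz,\vs\in\gZ_i$. Bounding it by the absolute difference gives at most $\bar\epsilon_i(h_t)$, so the first sum is at most $\sum_i\frac{n_i}{n}\bar\epsilon_i(h_t)$. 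For the second sum, use $0\le \E[\ell\mid\gZ_i]\le \ell_{\max}$, which bounds it by $\ell_{\max}\sum_i|p_i-n_i/n|$.

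\textbf{Step 3 (concentration).} Apply Bretagnolle--Huber--Carol:
\[
\Pr\Big(\sum_i|n_i/n-p_i|\ge\lambda\Big)\le 2^{C_t}e^{-n\lambda^2/2}.
\]
Setting the right-hand side to $\delta$ gives $\lambda=\sqrt{(2C_t\ln2+2\ln(1/\delta))/n}$, which is exactly the stated term. Combining with Steps 1--2 yields the theorem.

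\textbf{Main obstacle.} The delicate part is the bookkeeping in Step 1, not the inequalities. The statement mixes per-class Gaussians $\gN_i$, regions $\gZ_i$, and the class-averaged loss $L(\mD,h_t)$. I need the cell counts to be genuinely multinomial (i.i.d.\ draws from the mixture $P$), to coincide with the $n_i$, and $\hat L$ to be dominated by $L(\mD,h_t)$. I would state these identifications explicitly as the interpretation under which the bound holds, rather than try to prove them in full generality.
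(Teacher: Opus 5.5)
Your proposal is essentially the paper's own proof: the same add-and-subtract of $\sum_i \frac{n_i}{n}(\cdot)$, the cell-frequency term bounded by $\ell_{\max}\sum_i|p_i-n_i/n|$ and closed with Bretagnolle--Huber--Carol, and the within-cell term bounded by $\bar\epsilon_i(h_t)$. Like the paper, you read $L(\mD,h_t)$ as the $n_i/n$-weighted empirical risk and declare $(n_1,\dots,n_{C_t})$ multinomial with parameters $n$ and $(p_1,\dots,p_{C_t})$ without further justification; the paper does the same, only without flagging it. One identification is missing from your list: invoking $\bar\epsilon_i(h_t)$ replaces your $\E_{\vz\sim P}[\,\cdot\mid\vz\in\gZ_i]$ by an expectation under $\gN_i$ restricted to $\gZ_i$, a swap the paper also makes silently when it writes $L(P,h_t)=\sum_i p_i L(\gN_i,h_t)$.
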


This result shows that the test error of the classifier $h_t$ can be controlled by both the training error and the robustness term $\bar{\epsilon} = \sum_{i=1}^{C_t}  \frac{n_i}{n} \bar{\epsilon}_i(h_t) $, which tells how robust is the loss w.r.t. to a small change in the input of the samples around the class prototypes. A stronger robustness (i.e., smaller $\bar{\epsilon}$) can lead to a tighter bound on the test error, suggeting a better model. When both $L(\mD, h_t)$ and $\bar{\epsilon}$ are small, the model must have small test error and hence generalize well on unseen data. On the other hand, a bad model will exhibit a large training error $L(\mD, h_t)$ or large $\bar{\epsilon}$. Therefore, Theorem~\ref{thm-LCA-generalization} plays as the theoretical foundation for our LCA loss (\ref{LCA-loss}). Training by this loss would arguably improve both performance and robustness of the classifier, which is crucial for real-world CIL tasks.

\begin{corollary} \label{cor-LCA-generalization}
Given the notations and assumptions as in Theorem \ref{thm-LCA-generalization}, if $n_i = m$ for all $i$, then the following holds with probability at least $1-\delta$: 
 \begin{equation}
 \label{eq-cor-Local-Sensitivity-generalization}
   L(P, h_t) \le  L(\mD, h_t) +  {  \frac{1}{C_t} \sum_{i=1}^{C_t}  \bar{\epsilon}_i(h_t) }   +  \ell_{\max} \sqrt{\frac{\ln4}{m} + \frac{2\ln(1/\delta)}{m C_t}}
\end{equation}
\end{corollary}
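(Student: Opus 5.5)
The corollary is a direct specialization of Theorem~\ref{thm-LCA-generalization}, so the plan is to substitute $n_i = m$ for every $i \le C_t$ into the bound (\ref{eq-thm-Local-Sensitivity-generalization}) and simplify each term. No new probabilistic argument is needed. The high-probability event is exactly the one supplied by Theorem~\ref{thm-LCA-generalization}, so the confidence level $1-\delta$ is inherited unchanged.

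The first step concerns the sample counts. With $n_i = m$ for all $i$, the total sample count becomes $n = \sum_{i=1}^{C_t} n_i = m C_t$. Each weight in the robustness term is therefore $n_i/n = m/(mC_t) = 1/C_t$. This turns $\sum_{i=1}^{C_t} \frac{n_i}{n}\bar{\epsilon}_i(h_t)$ into the uniform average $\frac{1}{C_t}\sum_{i=1}^{C_t}\bar{\epsilon}_i(h_t)$. The training term $L(\mD, h_t)$ is defined as the uniform class average in (\ref{LCA-loss}) and does not depend on the $n_i$ through its weights, so it is left untouched.

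The second step simplifies the complexity term. Substituting $n = mC_t$ gives
\[
\sqrt{\frac{C_t\ln 4 + 2\ln(1/\delta)}{mC_t}} = \sqrt{\frac{\ln 4}{m} + \frac{2\ln(1/\delta)}{mC_t}}.
\]
Multiplying by $\ell_{\max}$ yields exactly the last term of (\ref{eq-cor-Local-Sensitivity-generalization}). Assembling the three pieces then gives the corollary.

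There is essentially no obstacle here. The only point to check is that $\ell_{\max}$ and the $\bar{\epsilon}_i(h_t)$ are defined identically in both statements, which holds because the corollary explicitly inherits the theorem's notation. It is also worth remarking afterwards that with equal class sizes the $\ln 4/m$ term does not vanish as $C_t$ grows, whereas the confidence term does.
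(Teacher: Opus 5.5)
Your proposal is correct and matches the paper, which states only that the corollary is a direct consequence of Theorem~\ref{thm-LCA-generalization}; substituting $n = mC_t$ and $n_i/n = 1/C_t$ is all that is required. One minor note: the theorem's proof expands $L(\mD,h_t)$ as $\sum_i \frac{n_i}{n} L(\mD_i,h_t)$, not with the uniform weights $1/C_t$ of (\ref{LCA-loss}), but with $n_i = m$ the two weightings coincide, so leaving that term untouched is harmless.
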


This is a direct consequence of  Theorem~\ref{thm-LCA-generalization}. It suggests that, to assure high generalization, the number $m$ of samples for each class should not be too small when doing alignment. If one use a small $m$, the uncertainty part in (\ref{eq-cor-Local-Sensitivity-generalization}) will be large, meaning some randomness (by noise) can harm the alignment step.

\begin{remark}
Although the LCA loss (\ref{LCA-loss}) is introduced to the CIL context, the loss is general enough to be employed in many other contexts. Indeed, one can use LCA loss to train a CIL classifier. Also, one can easily plug LCA to do a finetuning step for the existing CIL methods. In those cases, the theoretical benefits of LCA shown in Theorem~\ref{thm-LCA-generalization} may remain valid.
It is worth noting that the result in Theorem~\ref{thm-LCA-generalization} holds when the feature distributions (or prototypes) are fixed. This means Theorem~\ref{thm-LCA-generalization} may not directly apply for the cases of prototype changes.
\end{remark}

To fully understand the generalization dynamic of the overall CIL classifier, the previous results are insufficient. We need to take the backbone change into consideration. A change to the main backbone will lead to a change in the class prototypes that represent the Gaussian $\gN_i$. As a result, the overall distribution $P$ in Theorem~\ref{thm-LCA-generalization} will change after learning a new task.

After training task $t-1$, ${\gN}_i$ can accurately represent each new class $i$ of task $t-1$ and $P_{t-1} = \frac{1}{C_{t-1}} \sum_{i=1}^{C_{t-1}} \gN_i$ represents the overall distribution. After training a new task $t$, each distribution ${\gN}_i$ may change to $\widehat{\gN}_i$ due to backbone change. Therefore, the overall distribution until task $t$ can be rewritten as $\widehat{P}_{t} = \frac{1}{C_{t}} \sum_{i=1}^{C_{t-1}} \widehat{\gN}_i + \frac{1}{C_{t}} \sum_{j= C_{t-1}}^{C_t} \gN_j$.  However using this distribution to analyze generalization ability for all trained tasks may not be accurate. What we should analyze is for the distribution $P_{t} = \frac{1}{C_t} \sum_{i=1}^{C_t} \gN_i$. We have the following results.

\begin{theorem} \label{thm-LCA-generalization-change}
Given the notations in Theorem \ref{thm-LCA-generalization}, we have
\begin{equation}
 \label{thm-LCA-generalization-change-eq-01}
L(P_t, h_t) \le 2 \ell_{\max} \textsf{TV}(P_t, \widehat{P}_t) + L(\widehat{P}_t, h_t)    
\end{equation}
where $\textsf{TV}(\cdot,\cdot)$ denotes the total variation distance. Let $\hat{\mD} = \bigcup_{i \le C_{t-1}} \hat{\mD}_i \bigcup_{j = C_{t-1}}^{C_t} \mD_j$, with $\hat{\mD}_i \sim \widehat{\gN}_i$ and $\mD_j \sim {\gN}_j$, contains i.i.d. samples. For any $\delta >0$, the following holds with probability at least $1-\delta$: 
 \begin{equation}
 \label{thm-LCA-generalization-change-eq-02}
   L(P_t, h_t) \le 2 \ell_{\max} \textsf{TV}(P_t, \widehat{P}_t) +  L(\hat{\mD}, h_t) +   \sum_{i=1}^{C_t}  \frac{n_i}{n} \bar{\epsilon}_i(h_t)    +  \ell_{\max} \sqrt{\frac{C_t\ln4 + 2\ln(1/\delta)}{n}}
\end{equation}
\end{theorem}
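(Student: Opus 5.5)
The proof has two parts. First we establish the change-of-measure inequality (\ref{thm-LCA-generalization-change-eq-01}). Then we obtain (\ref{thm-LCA-generalization-change-eq-02}) by applying Theorem~\ref{thm-LCA-generalization} to the shifted distribution $\widehat{P}_t$ and its sample $\hat{\mD}$.

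For (\ref{thm-LCA-generalization-change-eq-01}), write $L(P_t,h_t)=\E_{\vz\sim P_t}[\ell(h_t,\vz)]$ and $L(\widehat{P}_t,h_t)=\E_{\vz\sim \widehat{P}_t}[\ell(h_t,\vz)]$. Their difference is
\[
L(P_t,h_t)-L(\widehat{P}_t,h_t)=\int \ell(h_t,\vz)\,\big(dP_t-d\widehat{P}_t\big)(\vz).
\]
Let $p_t,\hat p_t$ be densities with respect to a common dominating measure, e.g.\ $P_t+\widehat{P}_t$. Since $0\le \ell\le \ell_{\max}$, the integral is at most $\ell_{\max}\int (p_t-\hat p_t)_+ = \ell_{\max}\textsf{TV}(P_t,\widehat{P}_t)$. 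This already gives the bound with constant $1$. The cruder estimate $\ell_{\max}\int|p_t-\hat p_t| = 2\ell_{\max}\textsf{TV}$ gives the stated constant $2$. Either way (\ref{thm-LCA-generalization-change-eq-01}) follows; we state the factor $2$ to avoid depending on the normalization convention for $\textsf{TV}$. The only assumption needed is boundedness and nonnegativity of the loss, which Theorem~\ref{thm-LCA-generalization} already assumes.

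For (\ref{thm-LCA-generalization-change-eq-02}), observe that $\widehat{P}_t=\frac{1}{C_t}\big(\sum_{i\le C_{t-1}}\widehat{\gN}_i+\sum_{j>C_{t-1}}\gN_j\big)$ is again a uniform mixture of $C_t$ class-conditional Gaussians. The sample $\hat{\mD}$ consists of $n_i$ i.i.d.\ draws from each component. So Theorem~\ref{thm-LCA-generalization} applies verbatim with $\gN_i$ replaced by $\widehat{\gN}_i$ for $i\le C_{t-1}$, with the same partition $\{\gZ_i\}$ and with $\mD$ replaced by $\hat{\mD}$. With probability at least $1-\delta$ this gives
\[
L(\widehat{P}_t,h_t)\le L(\hat{\mD},h_t)+\sum_i \frac{n_i}{n}\bar\epsilon_i(h_t)+\ell_{\max}\sqrt{\frac{C_t\ln4+2\ln(1/\delta)}{n}}.
\]
Here $\bar\epsilon_i$ is interpreted with respect to the shifted components and samples. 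Substituting this into (\ref{thm-LCA-generalization-change-eq-01}) yields (\ref{thm-LCA-generalization-change-eq-02}). No union bound is needed, since the first inequality is deterministic.

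The main point requiring care is this second step. We must check that the proof of Theorem~\ref{thm-LCA-generalization} depends only on the mixture structure (uniform weights, independent per-class samples, a fixed partition into local regions) and not on the specific $\gN_i$. That proof presumably runs through a multinomial concentration (Bretagnolle--Huber--Carol type) over the $C_t$ regions, which is the origin of the $C_t\ln 4$ term. Its hypotheses are indeed distribution-free beyond boundedness of $\ell$. The one subtlety is notational: the robustness terms $\bar\epsilon_i(h_t)$ in (\ref{thm-LCA-generalization-change-eq-02}) must be read with $\widehat{\gN}_i,\hat{\mD}_i$ in place of $\gN_i,\mD_i$ for the old classes. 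We will state this explicitly.
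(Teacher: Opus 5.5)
Your proposal matches the paper's proof: the paper writes $L(P_t,h_t)-L(\widehat P_t,h_t)$ as $\int \ell(h_t,\vz)\,(p_t(\vz)-\hat p_t(\vz))\,d\vz$, bounds it by $\ell_{\max}\int|p_t-\hat p_t| = 2\ell_{\max}\textsf{TV}(P_t,\widehat P_t)$, and then simply invokes Theorem~\ref{thm-LCA-generalization} on $\widehat P_t$ and $\hat{\mD}$ for the second bound. Your extra remarks are correct refinements that the paper leaves implicit: nonnegativity of $\ell$ already gives the constant $1$, and $\bar\epsilon_i(h_t)$ must be read with $\widehat{\gN}_i,\hat{\mD}_i$ for the old classes.
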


Basically, this theorem says that the test error of model $h_t$ will be small when one can ensure that the LCA loss ($L(\hat{\mD}, h_t) +   \sum_{i=1}^{C_t}  \frac{n_i}{n} \bar{\epsilon}_i(h_t)$) is small and the induced distribution $\widehat{P}_t$ is close to $P_t$ (the accurate feature distribution).  When distributions $\widehat{P}_t$ and $P_t$ are far from each other, the backbone really causes catastrophic forgetting and thus can significantly increase the errors for past tasks. As a result, bound (\ref{thm-LCA-generalization-change-eq-02}) suggests that a good CIL model should not significantly change the feature distributions of past tasks while being both robust and accurate for each class.

Return to our CIL method, both components have their own important roles. IM can train the backbone to adapt the feature distribution to the new task, but also reduces catastrophic forgetting for past tasks and hence keeps small $\textsf{TV}(P_t, \widehat{P}_t)$. The classifier alignment by LCA can simultaneously train the new part, adapt the old part while encouraging robustness of the overall classifier. Those roles are very crucial to ensure high performance for CIL. Therefore those analyses reveal the foundational support for LCA. 

\section{Experiments}
This section presents the details of our experiments on seven benchmark datasets, along with an ablation study to examine some design aspects of our method.

\subsection{Experiment Setup}
\label{subsec:exp-setup}
\textbf{Dataset.} We follow the setup in \citep{mcdonnell2023ranpac} to select datasets and define the number of classes in each incremental task. Specifically, we evaluate on CIFAR100 \citep{krizhevsky2009learning}, ImageNet-R (IN-R) \citep{hendrycks2021many}, ImageNet-A (IN-A) \citep{hendrycks2021natural}, CUB-200 (CUB) \citep{welinder2010caltech}, OmniBenchmark (OB) \citep{zhang2022benchmarking}, VTAB \citep{zhai2019large}, and StanfordCars (CARS) \citep{krause2013stanfordcars}. All datasets are split into 10 tasks, except VTAB which is divided into 5.  

\textbf{Baselines.} the following methods are used for comparison: 
\begin{itemize}
    \item \textbf{Ours:} We consider two variants: \textbf{IM} (Incremental Merging) and \textbf{IM+LCA}, where the latter incorporates LCA in the alignment phase.
    \item Pre-trained based CIL methods: \textit{CODA-Prompt} \citep{smith2023coda}, \textit{DualPrompt} \citep{wang2022dualprompt}, \textit{L2P} \citep{wang2022learning}, \textit{EASE} \citep{zhou2024expandable}, \textit{MOS} \citep{sun2025mos},  \textit{SLCA} \citep{zhang2023slca}, \textit{APER} \citep{zhou2025revisiting}. 
\end{itemize} 
For fair comparison and reproducibility, we adopt the implementation from \citep{sun2025pilot} for all baseline methods and use the same pre-trained backbone, ViT-B/16 trained on ImageNet-1K (ViT-B/16-IN1K) \citep{dosovitskiy2020image}.

\textbf{Training.} We apply LoRA \citep{hu2022lora} and ensure a consistent computational budget by fixing the low-rank dimension to 64. All experiments are carried out on a single NVIDIA RTX 4090 GPU running Ubuntu 22.04. Detail on hyperparameters is mentioned in Appendix~\ref{appendix:training}.

\textbf{Evaluation Metrics.} Following standard practice, we report the performance along incremental stages, defined as \(\text{AA} = \tfrac{1}{T} \sum_{i=1}^T A_{t}\), where \(A_{t}\) denotes the average accuracy after done training on task \(t\).

\subsection{Experiment Results}

\subsubsection{Overall Benchmark}

\begin{table*}[t]
\caption{Average performance comparison on seven datasets with ViT-B/16-IN1K as the pretrained backbone. The best result is highlighted in bold, while the second best is highlighted in italic.}
\begin{center}
\begin{adjustbox}{max width=\textwidth}
\begin{tabular}{lccccccc|cc}
\toprule
\textbf{Method} & \textbf{CIFAR100} & \textbf{IN-R} & \textbf{IN-A} & \textbf{CUB }& \textbf{OB} & \textbf{VTAB} & \textbf{CARS} & \textbf{Overall}\\
\midrule
APER+Adapter & 90.8 ± 0.5 & 78.8 ± 0.6 & 58.9 ± 1.3 & 89.7 ± 1.3 & 80.3 ± 0.4 & 90.7 ± 0.6 & 50.6 ± 1.1 & 77.1 \\
APER+Finetune & 81.7 ± 0.9 & 72.1 ± 0.8 & 58.7 ± 3.7 & 89.5 ± 1.4 & 77.8 ± 1.2 & 91.8 ± 1.4 & 53.2 ± 1.4 & 75.0 \\
APER+SSF & 89.5 ± 1.0 & 78.1 ± 1.1 & 61.6 ± 0.5 & 89.6 ± 1.1 & 80.3 ± 0.6 & 91.8 ± 1.5 & 51.3 ± 1.1 & 77.5 \\
APER+VPT-Deep & 89.0 ± 0.9 & 78.8 ± 0.7 & 57.0 ± 0.4 & 89.0 ± 1.2 & 79.8 ± 0.4 & 91.9 ± 1.4 & 50.6 ± 3.0 & 76.6 \\
APER+VPT-Shallow & 88.1 ± 0.9 & 67.3 ± 3.5 & 56.9 ± 1.4 & 89.5 ± 1.4 & 79.7 ± 0.9 & 91.5 ± 0.8 & 50.9 ± 0.8 & 74.8 \\
CODA-Prompt & 91.0 ± 0.2 & 78.2 ± 0.4 & 48.1 ± 0.9 & 75.6 ± 1.2 & 71.0 ± 0.1 & 65.6 ± 2.6 & 26.3 ± 0.6 & 65.1 \\
DualPrompt & 86.7 ± 0.6 & 74.6 ± 0.5 & 55.3 ± 1.5 & 78.9 ± 1.0 & 74.4 ± 1.2 & 84.0 ± 5.9 & 49.4 ± 2.1 & 71.9 \\
EASE & 91.7 ± 0.3 & 82.4 ± 0.5 & \textit{67.8 ± 1.8} & 89.5 ± 1.2 & 80.8 ± 0.2 & \textit{93.3 ± 0.1} & 48.1 ± 1.2 & 79.1 \\
L2P & 87.7 ± 1.4 & 77.3 ± 0.6 & 52.6 ± 1.7 & 75.8 ± 1.8 & 73.8 ± 1.2 & 82.4 ± 2.8 & 53.4 ± 1.2 & 71.9 \\
MOS & \textit{94.3 ± 0.3} & 83.3 ± 0.6 & 67.6 ± 2.0 & \textbf{92.3 ± 0.6} & \textbf{86.1 ± 0.7} & 92.4 ± 0.5 & 71.4 ± 19.6 & \textit{83.9} \\
SLCA & 93.7 ± 0.3 & \textit{85.1 ± 0.3} & 45.1 ± 19.8 & 90.2 ± 0.9 & \textit{82.7 ± 0.6} & 91.1 ± 3.4 & \textit{74.6 ± 2.2} & 80.4 \\
\midrule
IM & 92.8 ± 0.1 & 84.3 ± 1.0 & 66.5 ± 1.1 & 86.7 ± 0.8 & 81.1 ± 0.8 & 84.6 ± 4.9 & 70.1 ± 1.5 & 80.9 \\
IM+LCA & \textbf{94.8 ± 0.3} & \textbf{85.8 ± 0.2} & \textbf{75.0 ± 0.5} & \textit{90.8 ± 0.3} & 81.4 ± 0.5 & \textbf{95.2 ± 1.1} & \textbf{76.2 ± 1.4} & \textbf{85.6} \\
\bottomrule
\end{tabular}
\end{adjustbox}
\end{center}
\label{tab:overall-aa}
\end{table*}

\begin{figure}[t]
\begin{center}
\includegraphics[width=\linewidth]{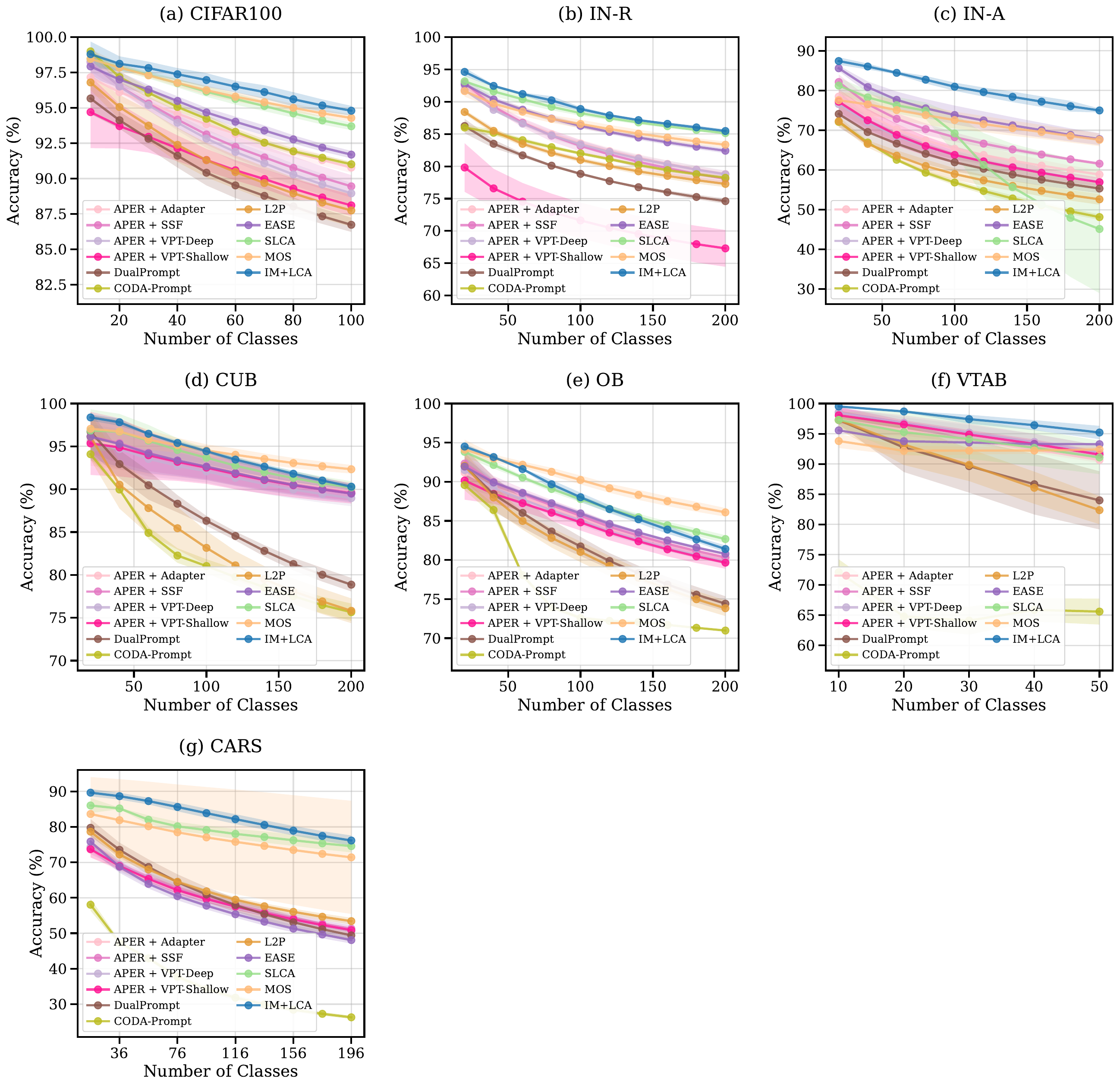}
\end{center}
\caption{Performance curves of different methods across all tasks and datasets. All methods use ViT-B/16-IN1K as the pre-trained backbone without any additional exemplars.}
\label{fig:performance-curve}
\end{figure}

\begin{figure}[t]
\begin{center}
\includegraphics[width=.8\linewidth]{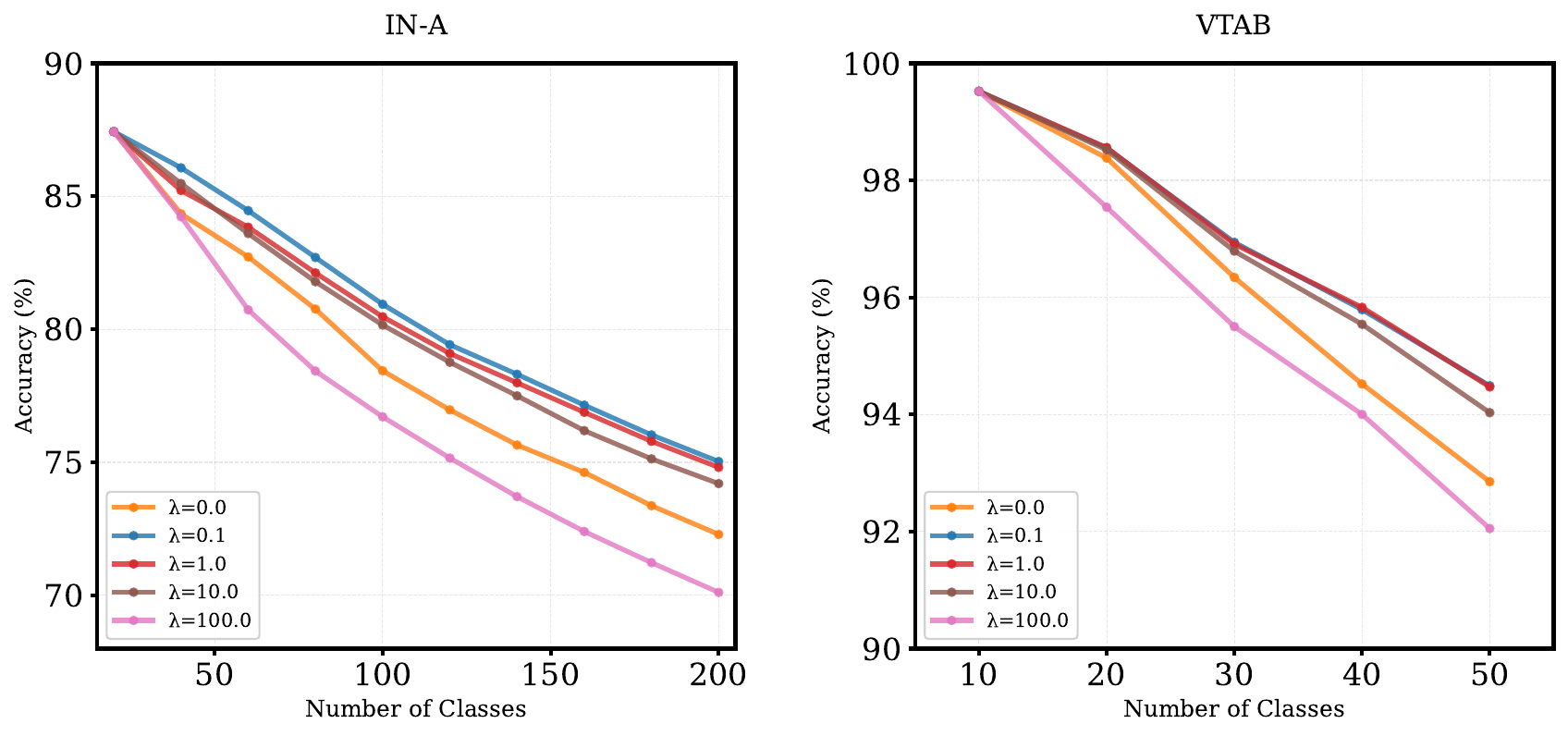}
\end{center}
\caption{Effect of \(\lambda\) on the accuracy on two datasets.}
\label{fig:ablation-lambda}
\end{figure}

\begin{figure}[t]
\centering
\begin{subfigure}[t]{0.48\linewidth}
    \centering
    \includegraphics[width=\linewidth]{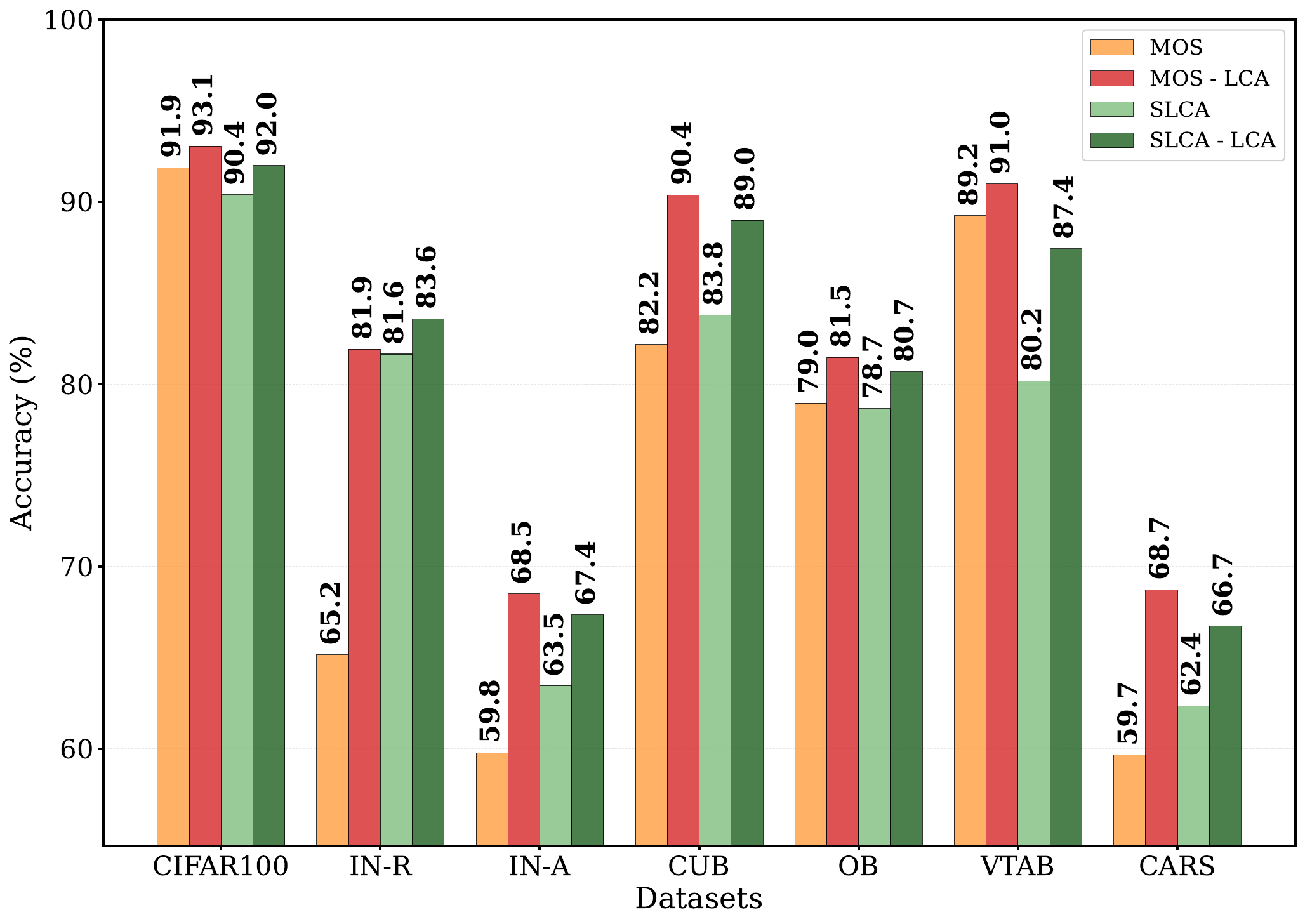}
    \caption{Performance for variants of MOS and SLCA. The postfix LCA means that method uses our alignment loss.}
    \label{fig:ablation-mos-slca}
\end{subfigure}
\hfill
\begin{subfigure}[t]{0.48\linewidth}
    \centering
    \includegraphics[width=\linewidth]{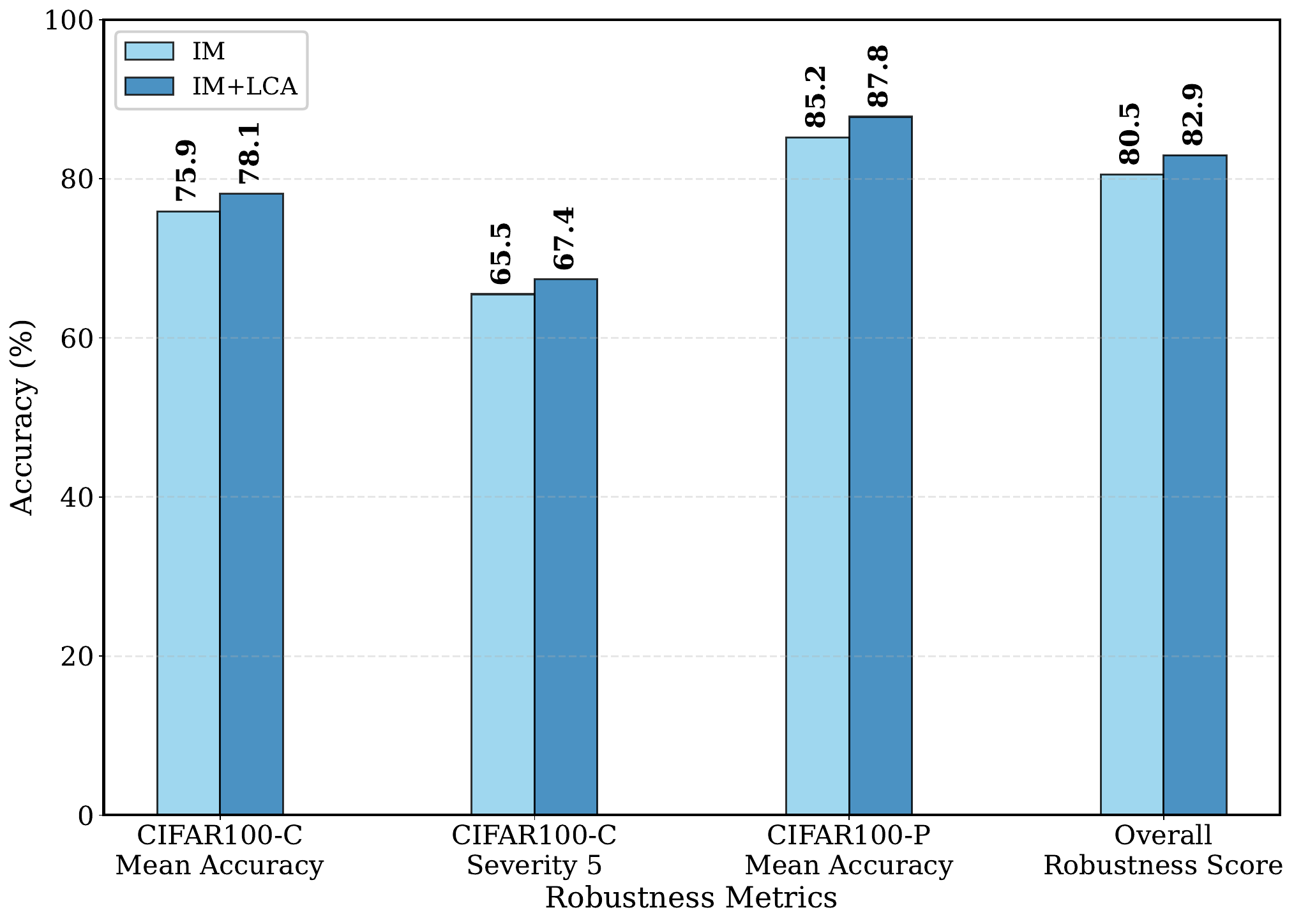}
    \caption{Robustness comparison between IM and IM+LCA on CIFAR100-C and CIFAR100-P. Metrics include mean accuracy, accuracy at severity level 5, and overall robustness score.}
    \label{fig:cifar100-robustness}
\end{subfigure}
\caption{(a) Complementary evaluation of LCA when using LCA for MOS and SLCA. (b) Robustness performance of IM and IM+LCA on corruption and perturbation benchmarks.}
\label{fig:lca-ablation-robustness}
\end{figure}

Table~\ref{tab:overall-aa} reports the mean and standard deviation of accuracy across three random seeds (1993, 1994, 1995). With LCA, the final model achieves the highest performance on five out of seven benchmark datasets, yielding an overall improvement of nearly 2\%. Unlike methods such as EASE \citep{zhou2024expandable} and MOS \citep{sun2025mos}, which either expand the backbone to integrate new tasks or rely on complex inference procedures, our approach requires no additional mechanism. The backbone is incrementally merged without storing past parameters or samples, and the only extra storage is the mean and covariance of each encountered class, which scales as \(\mathcal{O}(n)\) with the number of classes. Figure~\ref{fig:performance-curve} further shows that LCA consistently outperforms other methods across most datasets, with a notable improvement of 8\% on ImageNet-A compared to the runner-up.

\subsubsection{Robustness Measurement}
\label{sec:robustness-measurement}
\begin{figure}[t]
\centering
\begin{subfigure}[t]{0.49\linewidth}
    \centering
    \includegraphics[width=\linewidth]{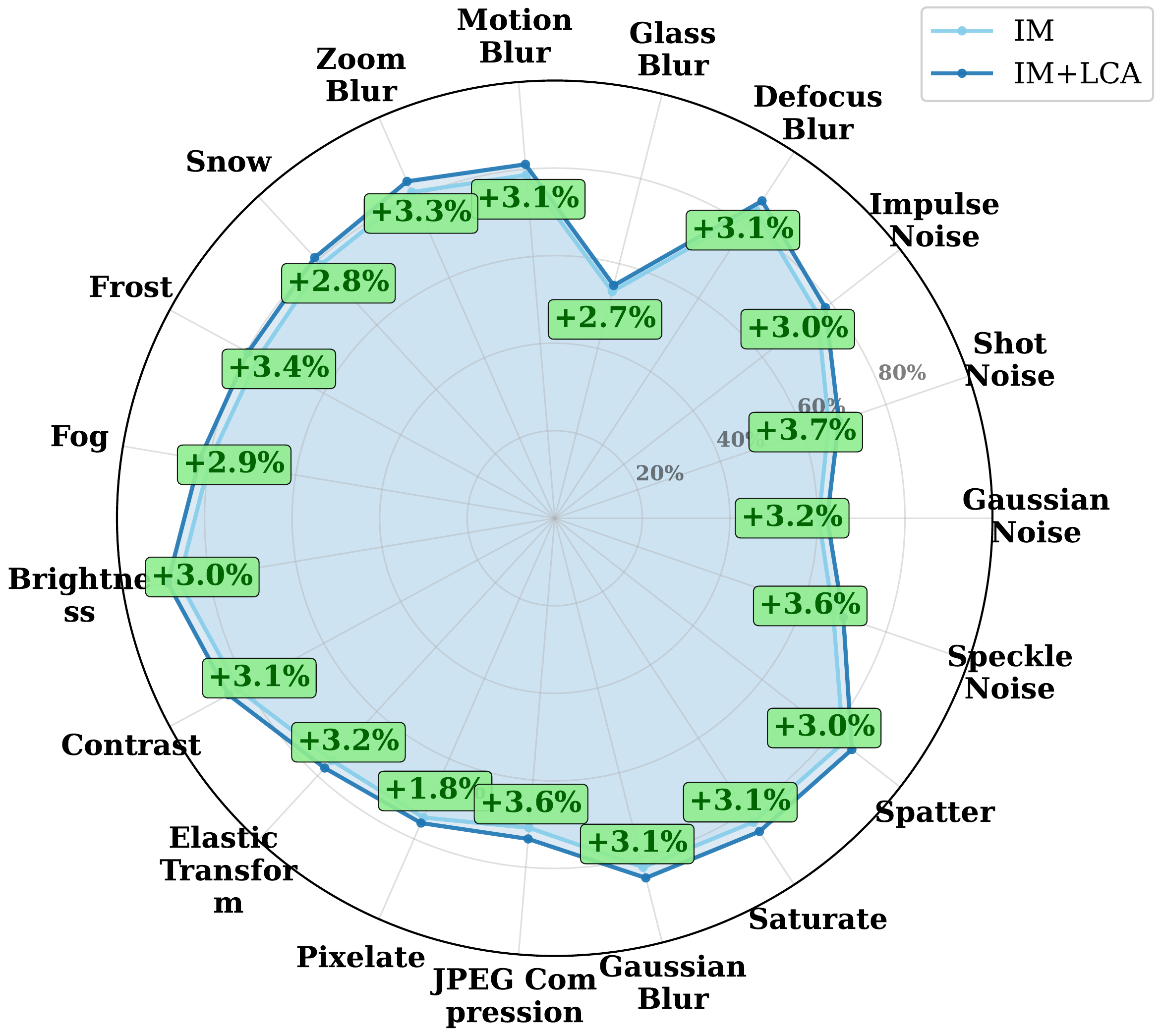}
    \caption{Performance on CIFAR100-C.}
    \label{fig:cifar100c-radar}
\end{subfigure}
\hfill
\begin{subfigure}[t]{0.49\linewidth}
    \centering
    \raisebox{3mm}{
        \includegraphics[width=\linewidth]{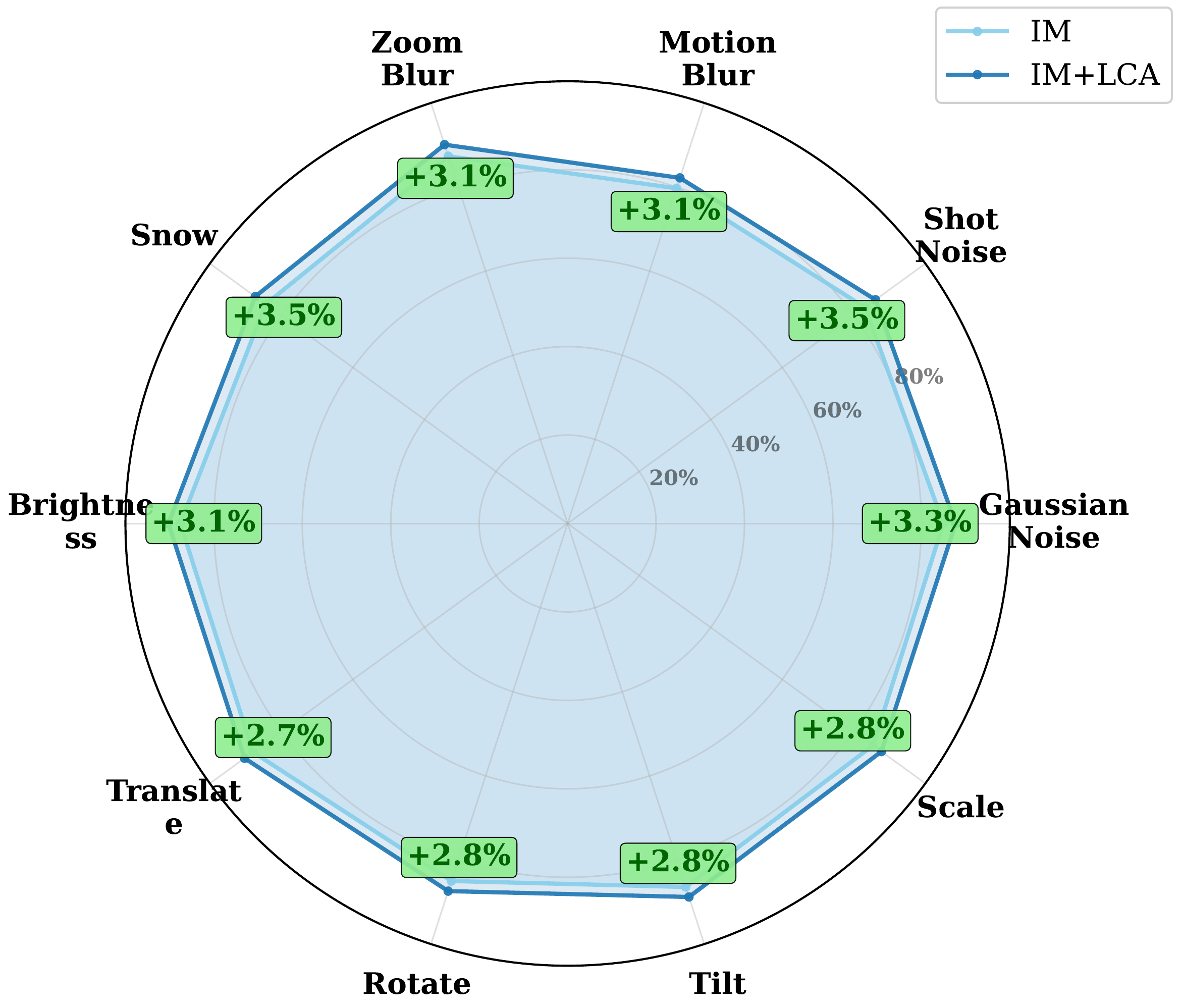}
    }
    \caption{Performance on CIFAR100-P.}
    \label{fig:cifar100p-radar}
\end{subfigure}
\caption{Accuracy performance of IM and IM+LCA under different corruption and perturbation types. The relative difference between IM and IM+LCA is highlighted.}
\label{fig:cifar100-radar}
\end{figure}

Following the standard \citep{hendrycks2019benchmarking, taori2020measuring}, we measure the performance of IM and IM+LCA on two robustness benchmarks, CIFAR100-C and CIFAR100-P. CIFAR100-C is constructed by applying 19 types of common corruptions (e.g., noise, blur, weather, and digital distortions) at 5 severity levels to the original CIFAR-100 test set, thereby evaluating model robustness under distribution shift. The final corruption robustness accuracy is computed as
\[
\text{Acc}_{\text{C}} = \frac{1}{19 \times 5} \sum_{c=1}^{19} \sum_{s=1}^{5} A_{c,s},
\]
where \(A_{c,s}\) is the accuracy under corruption type \(c\) with severity \(s\).

CIFAR100-P, in contrast, focuses on prediction stability under perturbations. Each image is perturbed by transformations such as translations, rotations, or noise, and the model’s consistency is measured across perturbed versions. The perturbation robustness accuracy is defined as
\[
\text{Acc}_{\text{P}} = \frac{1}{|\mathcal{P}|} \sum_{p \in \mathcal{P}} \frac{1}{N_p} \sum_{i=1}^{N_p} A_{p,i},
\]
where \(\mathcal{P}\) denotes the set of perturbation types, \(N_p\) is the number of perturbed samples for type \(p\), and \(A_{p,i}\) is the accuracy on the \(i\)-th perturbed sample.
Finally, to summarize robustness across both benchmarks, we report an overall robustness score:
\[
\text{Robustness} = \tfrac{1}{2}\left(\text{Acc}_{\text{C}} + \text{Acc}_{\text{P}}\right).
\]
Together, these benchmarks evaluate both accuracy under distributional corruption and resilience of predictions under small perturbations.

Figures~\ref{fig:cifar100-robustness} and~\ref{fig:cifar100-radar} summarize the robustness results. Figure~\ref{fig:cifar100-robustness} shows that when being trained with LCA, the model obtains a clear improvement on robustness, with more than +2\% gain in mean accuracy on CIFAR100-C and a +2.5\% gain on CIFAR100-P. The radar plots in Figure~\ref{fig:cifar100-radar} further reveal that LCA consistently improves accuracy across all corruption and perturbation types. These results confirm that LCA strengthens robustness both on average and across diverse perturbations.

\subsubsection{Ablation Study}
\textbf{LCA as a complementary component.} We further evaluate the applicability of LCA on SLCA \citep{zhang2023slca} and MOS \citep{sun2025mos}, two methods that also update the backbone progressively. To asses the impact of our proposed method, we construct two baselines by omitting the final step of these methods, and retaining only the update backbone part. From these, we build their counterparts, SLCA-LCA and MOS-LCA, which include the aligment step as the final step. We measure the average performance and report in Figure~\ref{fig:ablation-mos-slca}. We do not find the optimal hyperparameters but fixing the value of \(\lambda\) at 0.1, yet the variants with LCA show improvements on all scenerios, notably in IN-A, CUB, VTAB, and CARS, in some cases even matching the reported results of the original methods (SLCA-LCA achieves 89.0\% in CUB compare to 90.0\% of the original, MOS-LCA achieves 93.1\% in CIFAR100 compare to 94.3\%).

\textbf{Hyperparameter sensitivity.} In our method, \(\lambda\) controls the strength of the robustness penalty in the loss function. While an appropriate choice of \(\lambda\) is important for achieving strong performance as Figure~\ref{fig:ablation-lambda} shows that overly strong regularization can degrade results. In practice, we find that \(\lambda = 0.1\) provides stable and reliable performance across all datasets.

\section{Conclusion}
This paper investigates the mismatch that may arise between a continuously updated  backbone and the task-specific classifiers in continual learning. We propose a theoretically grounded loss function, with a provable bound on classification error, which enables the training of all classifiers using features generated from a Gaussian distribution. Extensive experiments on seven benchmark datasets confirm the effectiveness of our method, and additional robustness evaluations under various noisy conditions demonstrate consistent improvements. While this work primarily focuses on addressing the alignment phase, future research will explore integrating the proposed loss into the end-to-end training pipeline. Such an approach has the potential to further enhance the robustness of the backbone itself and the performance of the overall CIL method. 

Despite having significant contributions, our work still remains some limitations. For example, we have not investigated the proposed LCA loss in other contexts. Furthermore, although providing a theoretical foundation and novel insights, the developed theory does not take the training of the whole backbone into consideration. Addressing those limitations can open interesting directions for future research.

\section*{Reproducibility Statement}
We have taken several steps to ensure the reproducibility of our results. The hyperparameters required for both training and inference are explicitly reported in Appendix~\ref{appendix:training}. The complete source code is provided as supplementary material, where users can download and unzip the package, follow the installation instructions, and run the main script to reproduce our experiments. All datasets used in this work are publicly available, implementation details and evaluation protocols are described in Section~\ref{subsec:exp-setup}. Together, these resources are intended to make it straightforward for others to replicate and build upon our work.

\section*{Acknowledgements}
This work was supported by JSPS Grant-in-Aid for Challenging Exploratory Research - Grant Number JP25K22833 and JSPS Research on Academic Transformation Areas (A) - Grant Number JP22H05194. The authors also gratefully acknowledge the financial support provided by Hattori Hokokai Foundation and Telecommunications Advancement Foundation.

\bibliography{iclr2026_conference}
\bibliographystyle{iclr2026_conference}

\appendix

\clearpage
\section{Proof for main Theorems} \label{appendix:proof}

\begin{proof}[Proof of Theorem \ref{thm-LCA-generalization}]
Let $p_i = P(\gZ_i)$ be the probability measure of area $\gZ_i$, and note that $\sum_{i=1}^t p_i = 1$.

We first observe that
 \begin{eqnarray}
 \label{eq-app-thm-LCA-generalization-1}
   L(P, h_t) &=& L(P, h_t) -  \sum_{i=1}^{C_t}  \frac{n_i}{n} L(\gN_i, h_t) + \sum_{i=1}^{C_t}  \frac{n_i}{n} L(\gN_i, h_t)  - L(\mD, h_t)  + L(\mD, h_t)
\end{eqnarray}

Since $L(P, h_t) =  \sum_{i=1}^{C_t} p_i L(\gN_i, h_t)$ and $L(\gN_i, h_t) \le \ell_{\max}$, we observe that 
\begin{eqnarray}
 L(P, h_t) -  \sum_{i=1}^{C_t}  \frac{n_i}{n} L(\gN_i, h_t) 
   &=&  \sum_{i=1}^{C_t} p_i L(\gN_i, h_t) - \sum_{i=1}^{C_t}  \frac{n_i}{n} L(\gN_i, h_t) \\
   &=&  \sum_{i=1}^{C_t} \left(p_i - \frac{n_i}{n}\right) L(\gN_i, h_t) \\
 \label{eq-app-thm-LCA-generalization-2}
   &\le&  \ell_{\max} \sum_{i=1}^{C_t} \left| p_i - \frac{n_i}{n}\right|
\end{eqnarray}
Note that ($n_1, ..., n_{C_t}$) is a multinomial random variable with parameters $n$ and $(p_1, ..., p_{C_t})$. For any $\epsilon>0$, Bretagnolle-Huber-Carol inequality shows $\Pr\left( \sum_{i=1}^{C_t} \left| p_i -  \frac{n_i}{n} \right| \ge 2 \epsilon\right) \le 2^{C_t} \exp(-2n\epsilon^2)$. In other words, for any $\delta>0$, taking $\epsilon = \sqrt{\frac{C_t\ln2 - \ln\delta}{2n}}$, the following holds true with probability at least $1-\delta$:
\begin{eqnarray}
 \label{eq-app-thm-LCA-generalization-3}
 L(P, h_t) -  \sum_{i=1}^{C_t}  \frac{n_i}{n} L(\gN_i, h_t)   &\le&  C \sqrt{\frac{C_t\ln4 - 2\ln\delta}{n}}
\end{eqnarray}

Next we observe the second term:
\begin{eqnarray}
\sum_{i=1}^{C_t}  \frac{n_i}{n} L(\gN_i, h_t)  - L(\mD, h_t) &=& \sum_{i=1}^{C_t}  \frac{n_i}{n} L(\gN_i, h_t)  - \sum_{i=1}^{C_t}  \frac{n_i}{n} L(\mD_i, h_t) \\
&=& \sum_{i=1}^{C_t}  \frac{ n_i}{n} \left[ L(\gN_i, h_t) - L(\mD_i, h_t) \right]  \\
&=& \sum_{i=1}^{C_t}  \frac{ n_i}{n} \left( \E_{\vz \sim \gN_i}[\ell(h_t,\vz) - L(\mD_i, h_t)  : \vz \in \gZ_i] \right)  \\
&=& \sum_{i=1}^{C_t}  \frac{ n_i}{n} \left( \E_{\vz \sim \gN_i, \vs \sim \mD_i}[\ell(h_t,\vz) -  \ell(h_t,\vs)  : \vz, \vs \in \gZ_i] \right)  \\
&\le& \sum_{i=1}^{C_t}  \frac{ n_i}{n} \left( \E_{\vz \sim \gN_i, \vs \sim \mD_i}[|\ell(h_t,\vz) -  \ell(h_t,\vs) | : \vz, \vs \in \gZ_i] \right)  \\
\label{eq-app-thm-LCA-generalization-4}
&=& \sum_{i=1}^{C_t}  \frac{ n_i}{n}  \bar{\epsilon}_i(h_t)  
\end{eqnarray}

Combining (\ref{eq-app-thm-LCA-generalization-1}) and (\ref{eq-app-thm-LCA-generalization-3}) and (\ref{eq-app-thm-LCA-generalization-4})   completes the proof.
\end{proof}

\begin{proof}[Proof of Theorem \ref{thm-LCA-generalization-change}]
Denote $p_t(\vz)$ and $\hat{p}_t(\vz)$ as the density of distributions $P_t$ and $\widehat{P}_t$, respectively. Observe that
\begin{eqnarray}
 \label{thm-LCA-generalization-change-eq-01}
L(P_t, h_t) 
&=& L(P_t, h_t) - L(\widehat{P}_t, h_t) + L(\widehat{P}_t, h_t) \\
&=& \mathbb{E}_{\vz \sim P_t} \left[\ell(h_t,\vz)\right] - \mathbb{E}_{\vz \sim \widehat{P}_t} \left[\ell(h_t,\vz)\right] + L(\widehat{P}_t, h_t) \\
&=& \int \ell(h_t,\vz)p_t(\vz) d\vz - \int \ell(h_t,\vz)\hat{p}_t(\vz) d\vz + L(\widehat{P}_t, h_t) \\
&\le& \int |\ell(h_t,\vz)| \cdot | p_t(\vz) d\vz - \hat{p}_t(\vz) | d\vz + L(\widehat{P}_t, h_t) \\
&\le& \ell_{\max} \int | p_t(\vz) d\vz - \hat{p}_t(\vz) | d\vz + L(\widehat{P}_t, h_t) \\
&=& 2\ell_{\max} \textsf{TV}(P_t, \widehat{P}_t) + L(\widehat{P}_t, h_t)    
\end{eqnarray}
which shows the first statement. The second result thus follows using the result of Theorem \ref{thm-LCA-generalization}.
\end{proof}


\section{Training Details} \label{appendix:training}

\begin{table}[h]
\centering
\caption{Training and merging hyperparameters used in all experiments of LCA.}
\label{tab:training-hyperparams}
\begin{tabular}{ll}
\toprule
\textbf{Hyperparameter} & \textbf{Value} \\
\midrule
Training epochs          & 10 \\
Batch size               & 64 \\
Base learning rate       & \(1 \times 10^{-2}\) \\
Weight decay             & \(5 \times 10^{-4}\) \\
Optimizer                & SGD (momentum = 0.9) \\
Learning rate scheduler  & CosineAnnealing (eta-min = \(1 \times 10^{-6}\)) \\
Merging coefficient $\alpha$ & 1.0 \\
Alignment classifier epochs & 10 \\
Number of samples per class & 512 \\
Alignment batch size & 128 \\
LoRA rank & 64 \\
LoRA alpha & 128 \\
LoRA dropout & 0.0 \\
LoRA initialization & Gaussian \\
\bottomrule
\end{tabular}
\end{table}

\section{Details of Example Generation}
\label{appendix:example-generation}

In this section, we provide details on how Gaussian distributions are used to align the classifiers during the incremental training process. Pre-trained models typically produce well-structured representations, which allows us to approximate each class distribution using its empirical mean and covariance.

\paragraph{Storing class statistics.}
For each class \(c\) at training stage \(t\), we extract features using the backbone parameters \(\theta_t\). The empirical mean \(\mu_c \in \mathbb{R}^d\) and covariance \(\Sigma_c \in \mathbb{R}^{d \times d}\) are computed as
\begin{align}
\mu_c &= \frac{1}{K} \sum_{i=1}^{|\mathcal{D}_t|} \mathbf{1}(y_i = c)\,\phi(x_i; \theta_t), \\
\Sigma_c &= \frac{1}{K} \sum_{i=1}^{|\mathcal{D}_t|} \big(\phi(x_i; \theta_t) - \mu_c\big)\big(\phi(x_i; \theta_t) - \mu_c\big)^\top,
\end{align}
where \(\phi(\cdot; \theta_t)\) denotes the feature extractor with backbone parameters \(\theta_t\), and \(K = \sum_{i=1}^{|\mathcal{D}_t|} \mathbf{1}(y_i = c)\).

\paragraph{Feature replay via Gaussian sampling.}
Before each alignment step, we regenerate features for every class \(c \in \mathcal{Y}_t\) by sampling from a multivariate Gaussian distribution:
\begin{equation}
\hat{\vz}_c \sim \mathcal{N}(\mu_c, \Sigma_c).
\end{equation}
We generate approximately five times the batch size of synthetic features per class, which provides sufficient diversity for classifier alignment.

\section{Further Analysis} 
\label{appendix:further} 
\begin{figure}[t] 
\begin{center} 
\includegraphics[width=0.8\linewidth]{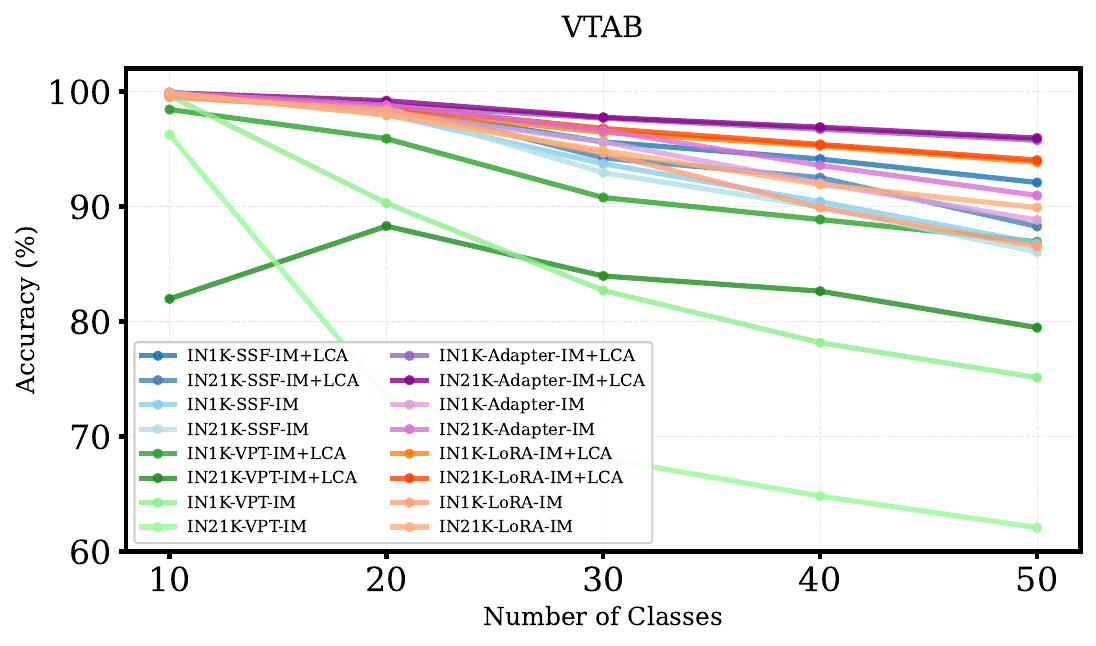} 
\end{center} 
\caption{Ablation study on different PEFT strategies with two backbones, ViT-B/16-1K and ViT-B/16-21K.}
\label{fig:ablation-backbone} 
\end{figure}

\textbf{Results with different fine-tuning strategies.}  
We further examine the adaptability of incremental merging and LCA with different parameter-efficient fine-tuning (PEFT) strategies on the VTAB dataset. The strategies include SSF \citep{lian2022scaling}, Adapters \citep{rebuffi2017learning}, VPT \citep{jia2022visual}, and LoRA \citep{hu2022lora}, evaluated with ViT-B/16 pretrained on both ImageNet-1K and ImageNet-21K.  

Figure~\ref{fig:ablation-backbone} shows that LCA consistently improves performance across all PEFT methods. In particular, VPT without LCA suffers a sharp performance drop, suggesting that incremental merging alone is insufficient to prevent forgetting in some cases. Adding LCA, however, proves beneficial across all methods, demonstrating its adaptability. Notably, despite having relatively few trainable parameters, Adapters emerge as a strong candidate within our pipeline, achieving the highest accuracy when combining with LCA.

\begin{table*}[t]
\caption{Average performance on different merge operators. CA means normal classifier alignment, while LCA is our proposed method.}
\begin{center}
\begin{adjustbox}{max width=\textwidth}
\begin{tabular}{lrrr}
\toprule
\textbf{Method} & \textbf{IN-A} & \textbf{VTAB} & \textbf{CARS} \\
\midrule
IM Max & 66.26 ± 0.65 & 84.76 ± 3.69 & 69.90 ± 1.47 \\
IM Max CA & 70.86 ± 1.10 & 92.05 ± 2.44 & 71.88 ± 0.49 \\
IM Max LCA & 73.73 ± 1.13 & 94.22 ± 1.06 & 74.85 ± 1.50 \\ \hline
IM Min & 66.49 ± 0.67 & 85.08 ± 3.44 & 69.79 ± 1.36 \\
IM Min CA & 71.11 ± 1.10 & 92.13 ± 2.33 & 71.82 ± 0.50 \\
IM Min LCA & 73.00 ± 1.20 & 94.24 ± 1.11 & 74.28 ± 1.29 \\ \hline
IM MaxAbs & 66.5 ± 1.10 & 84.6 ± 4.90 & 70.1 ± 1.50 \\
IM MaxAbs CA & 71.31 ± 0.82 & 91.71 ± 2.24 & 72.15 ± 0.52 \\
IM MaxAbs LCA & 75.0 ± 0.50 & 95.2 ± 1.10 & 76.2 ± 1.40 \\
\bottomrule
\end{tabular}
\end{adjustbox}
\end{center}
\label{tab:ablation-merge-operators}
\end{table*}

\section{Further Results on Accuracy}
\begin{figure}[t] 
\begin{center} 
\includegraphics[width=\linewidth]{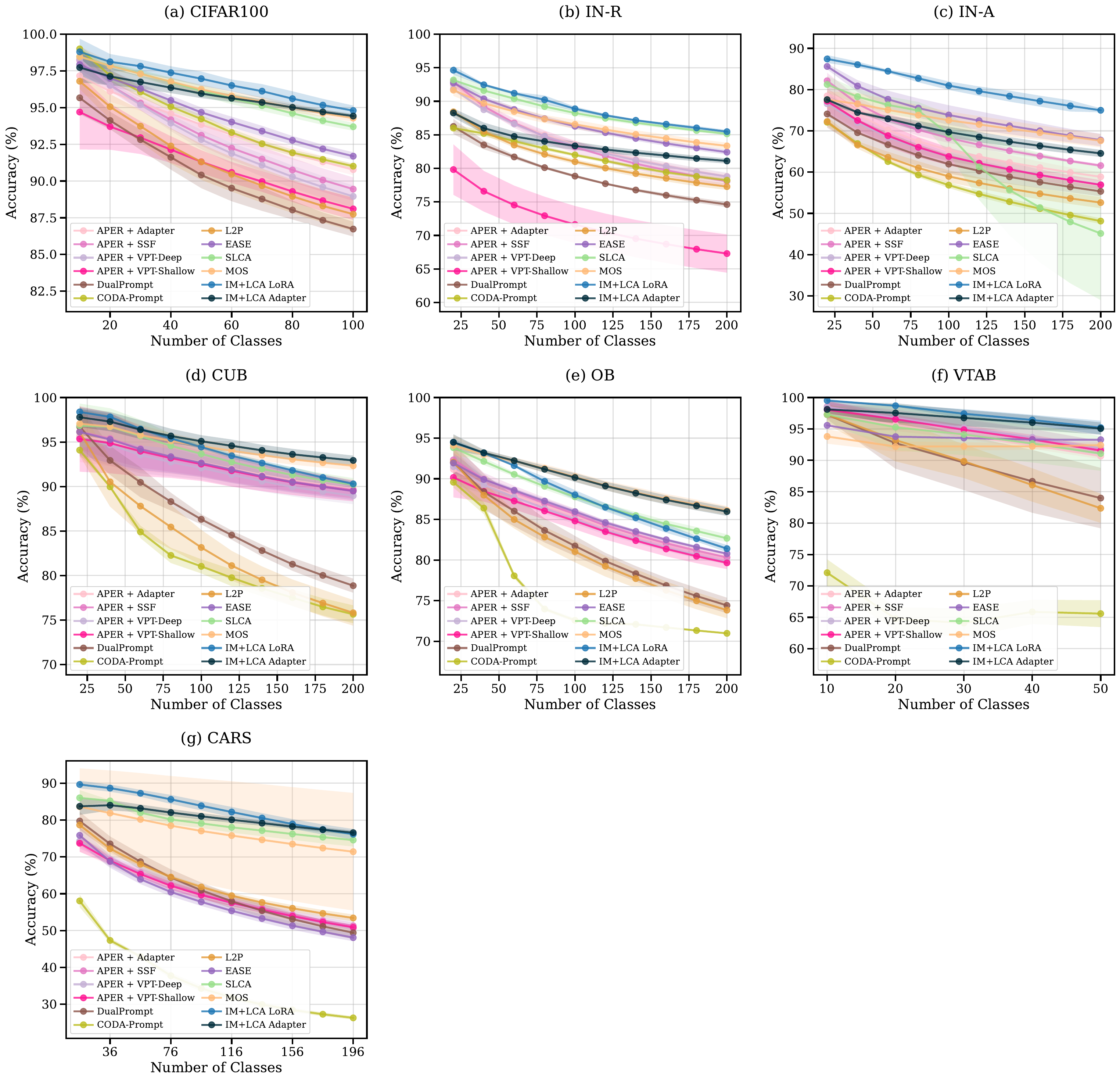} 
\end{center} 
\caption{Additional results on Performance Curve.}
\label{fig:ablation-performance-curve} 
\end{figure}

In this section, we report additional results of our proposed method when finetuning with the Adapter PEFT module~\citep{houlsby2019parameter}, shown in Figure~\ref{fig:ablation-performance-curve}. We refer to this variant as \textbf{IM+LCA Adapter}, while the results in the main text correspond to \textbf{IM+LCA LoRA}.

From the figure, we observe that several baseline methods suffer from strong instability under certain settings (e.g., SLCA on IN-A and MOS on CARS), whereas our method remains consistently stable across all three random seeds (1993, 1994, 1995). Furthermore, with Adapter, our approach achieves state-of-the-art performance on datasets such as CUB and OB. This confirms that the lower performance reported in the main text for these datasets is due to our reproducibility-oriented constraints on PEFT method choices, rather than limitations of the proposed method itself.

\section{Different Merging Operators}
\label{abl:different-merging-operators}
\begin{figure}[t]
    \centering
    \begin{subfigure}[t]{\linewidth}
        \centering
        \includegraphics[width=\linewidth]{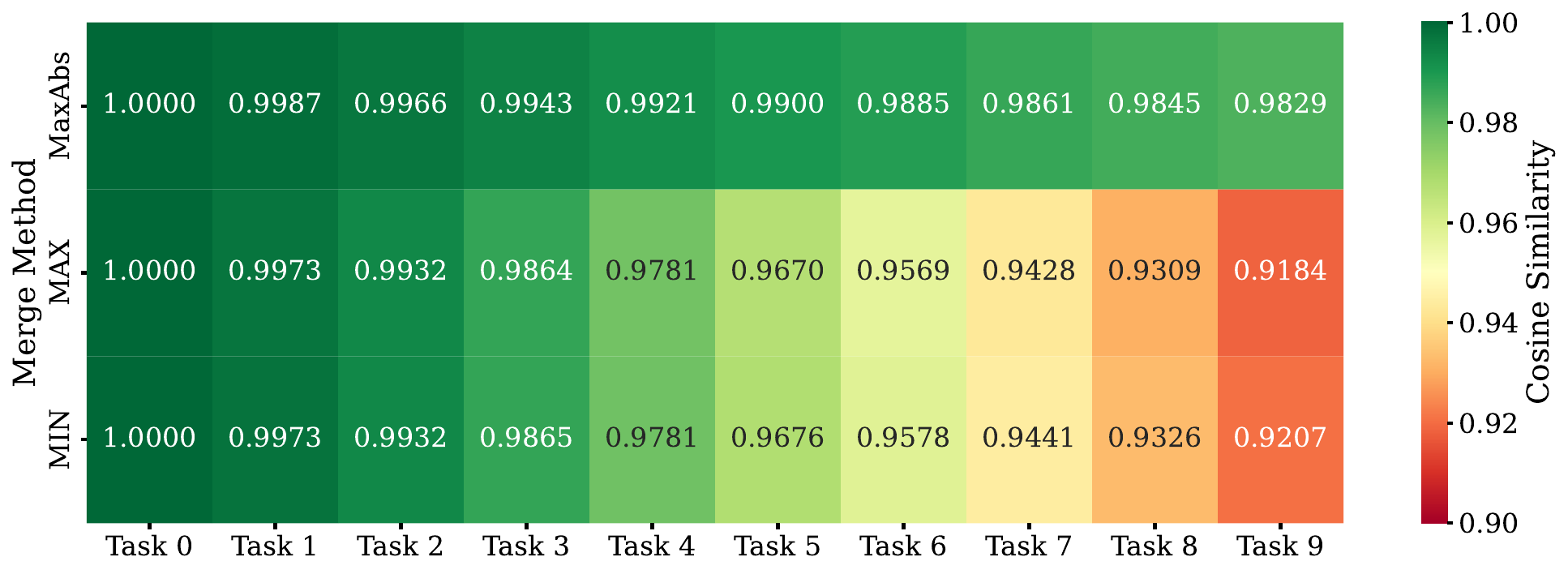}
        \caption{IN-A}
        \label{fig:ablation-backbone-a}
    \end{subfigure}
    \vspace{0.3em}

    \begin{subfigure}[t]{\linewidth}
        \centering
        \includegraphics[width=\linewidth]{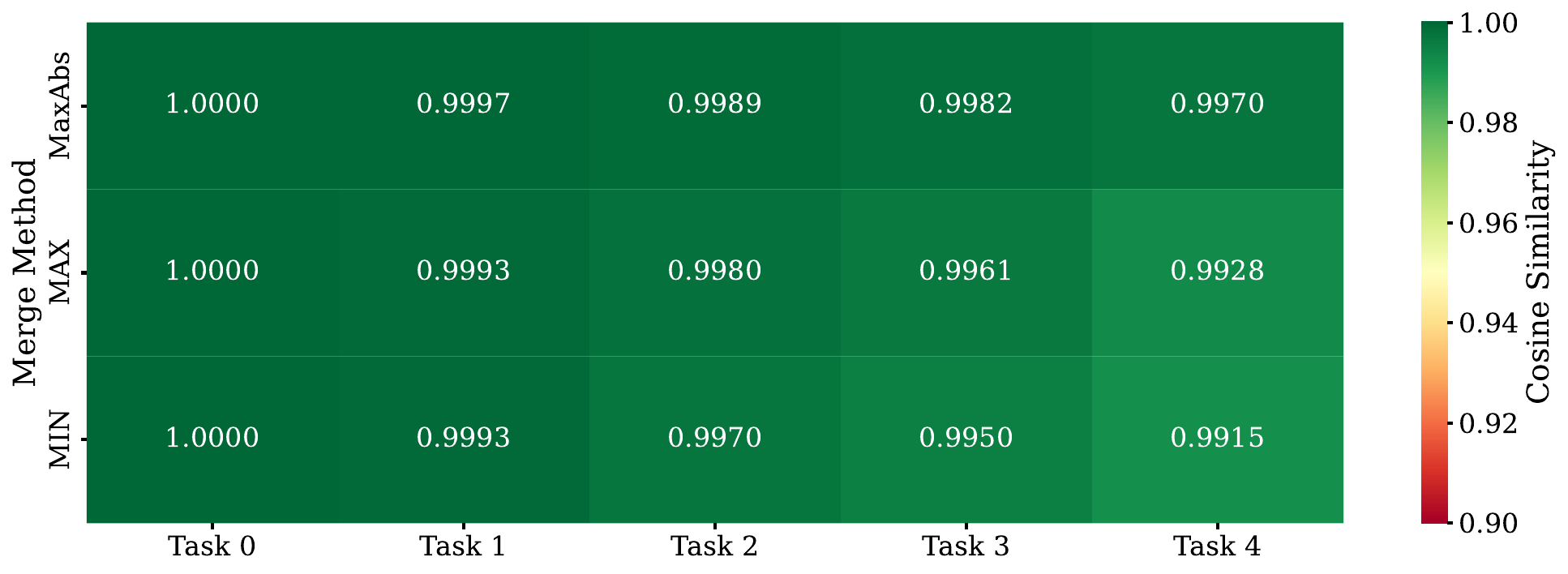}
        \caption{VTAB}
        \label{fig:ablation-backbone-b}
    \end{subfigure}
    \vspace{0.3em}

    \begin{subfigure}[t]{\linewidth}
        \centering
        \includegraphics[width=\linewidth]{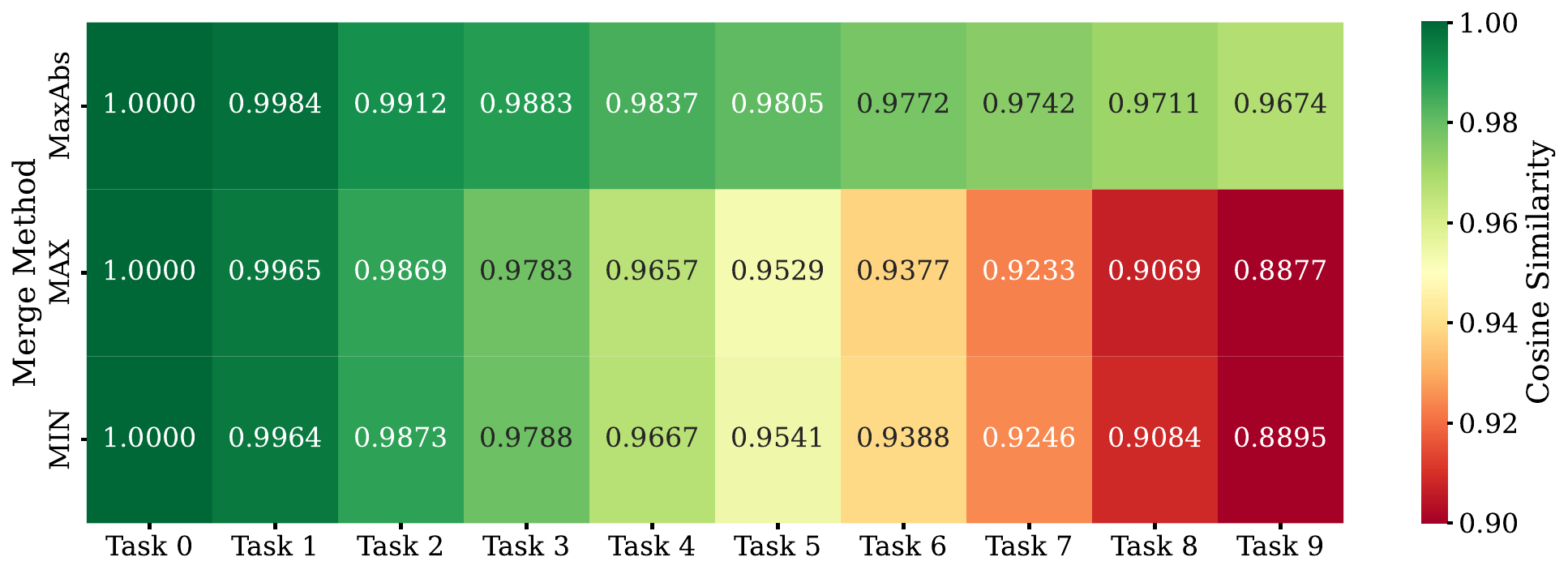}
        \caption{CARS}
        \label{fig:ablation-backbone-c}
    \end{subfigure}

    \caption{Similarity measurements between backbone representations across tasks, computed relative to the backbone obtained from the first task.}
    \label{fig:ablation-backbone-similarity}
\end{figure}

Table~\ref{tab:ablation-merge-operators} shows the benefits of our parameter-selection design. We evaluate the generalization of our method using three merge operators: Min, which selects the parameter with the smallest value; Max, which selects the largest; and MaxAbs, which selects the parameter with the largest absolute magnitude. When we want to reduce the cost of storage by merging only two models, the MaxAbs operator can be viewed as a simplified form of TIES-Merging without the trimming phase in \citet{yadav2023ties}, and is equivalent to the merge rule proposed in \citet{marczak2024magmax}.

Importantly, unlike prior works such as \citet{marczak2024magmax}, which apply merging to all model parameters, we show that merging only PEFT module parameters is substantially easier and more stable. Under this restricted setting, even very simple operators like Min or Max achieve competitive performance across benchmarks, demonstrating that PEFT-focused merging does not require complex selection strategies. Moreover, we observe that the merge coefficient does not need careful tuning as a fixed value of 1.0 works reliably across all benchmarks.

\section{Ablation on Robustness Term}
We measure the similarity between the backbone obtained at each task and the backbone learned after the first task. The backbone at the first task serves as an anchor, as it is assumed to bridge the gap between the pre-trained dataset and the target domain.
Figure~\ref{fig:ablation-backbone-similarity} illustrates how incremental finetuning with merging enables the backbone to evolve gradually over time, supporting the theoretical intuition discussed in Section~\ref{sec:theoretical-analysis}.

Table~\ref{tab:ablation-merge-operators} further shows the impact of including the robustness term during training. Across all merge operators, adding the robustness term yields consistent and clear improvements, demonstrating its effectiveness independent of the chosen merge strategy.

\section{Effect of LCA on Different Methods}
\begin{figure}[t] 
\begin{center} 
\includegraphics[width=\linewidth]{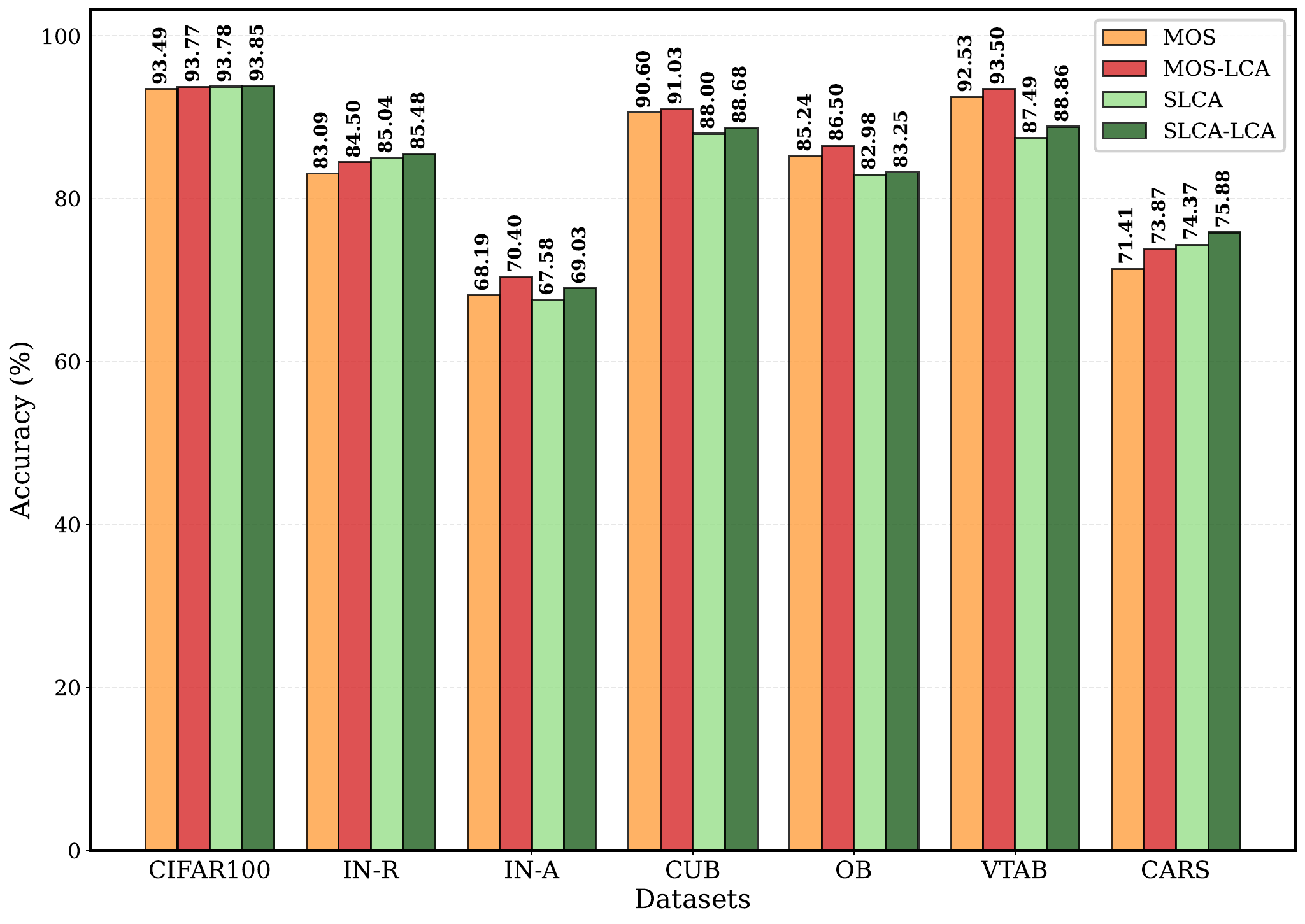} 
\end{center} 
\caption{Accuracy of the original methods and their LCA variants across standard datasets.}
\label{fig:ablation-attach-lca-overall} 
\end{figure}

\begin{figure}[t] 
\begin{center} 
\includegraphics[width=\linewidth]{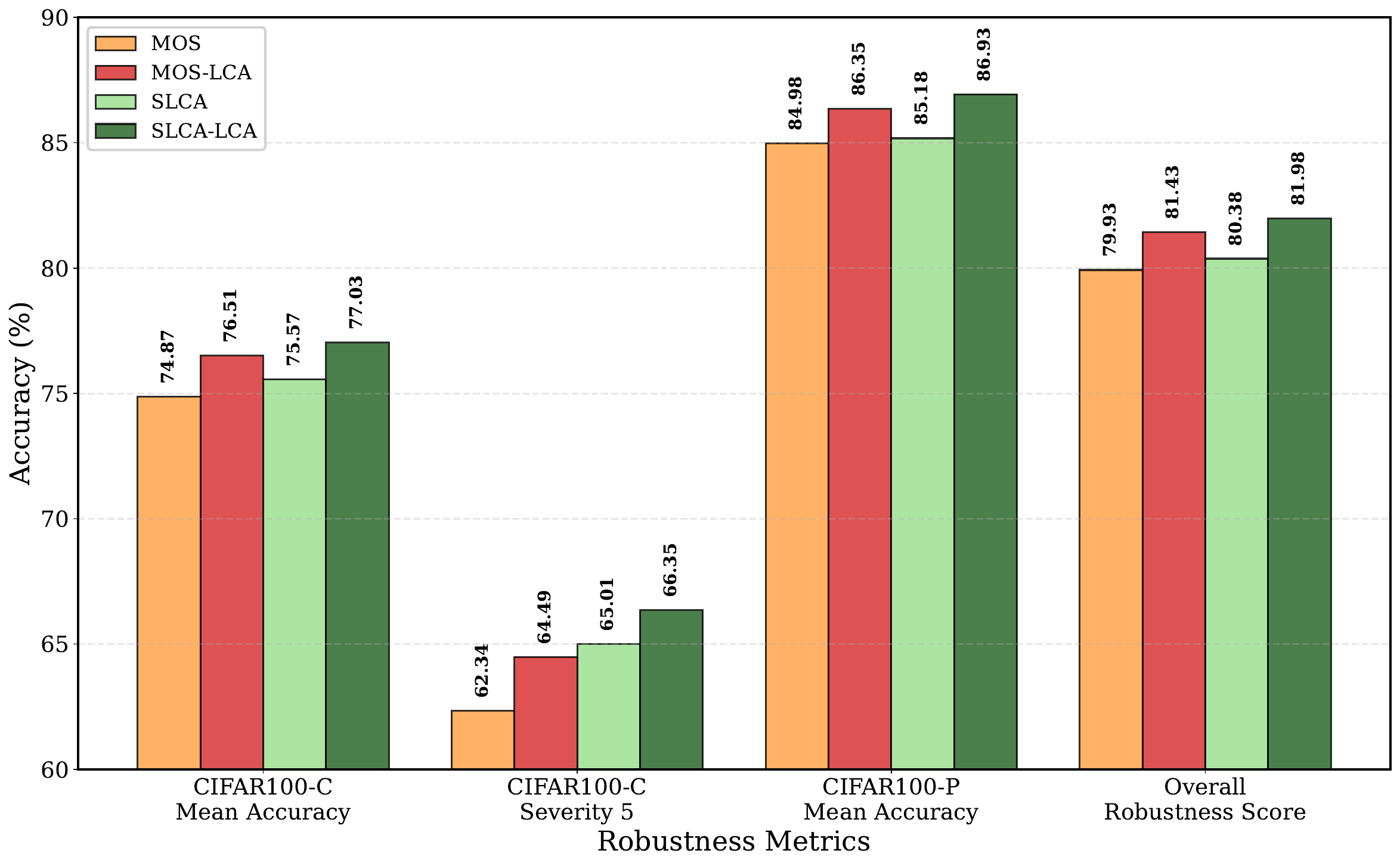} 
\end{center} 
\caption{Accuracy of the original methods and their LCA variants under various corruption and perturbation types.}
\label{fig:ablation-attach-lca-robustness} 
\end{figure}

\begin{figure}[t] 
\begin{center} 
\includegraphics[width=\linewidth]{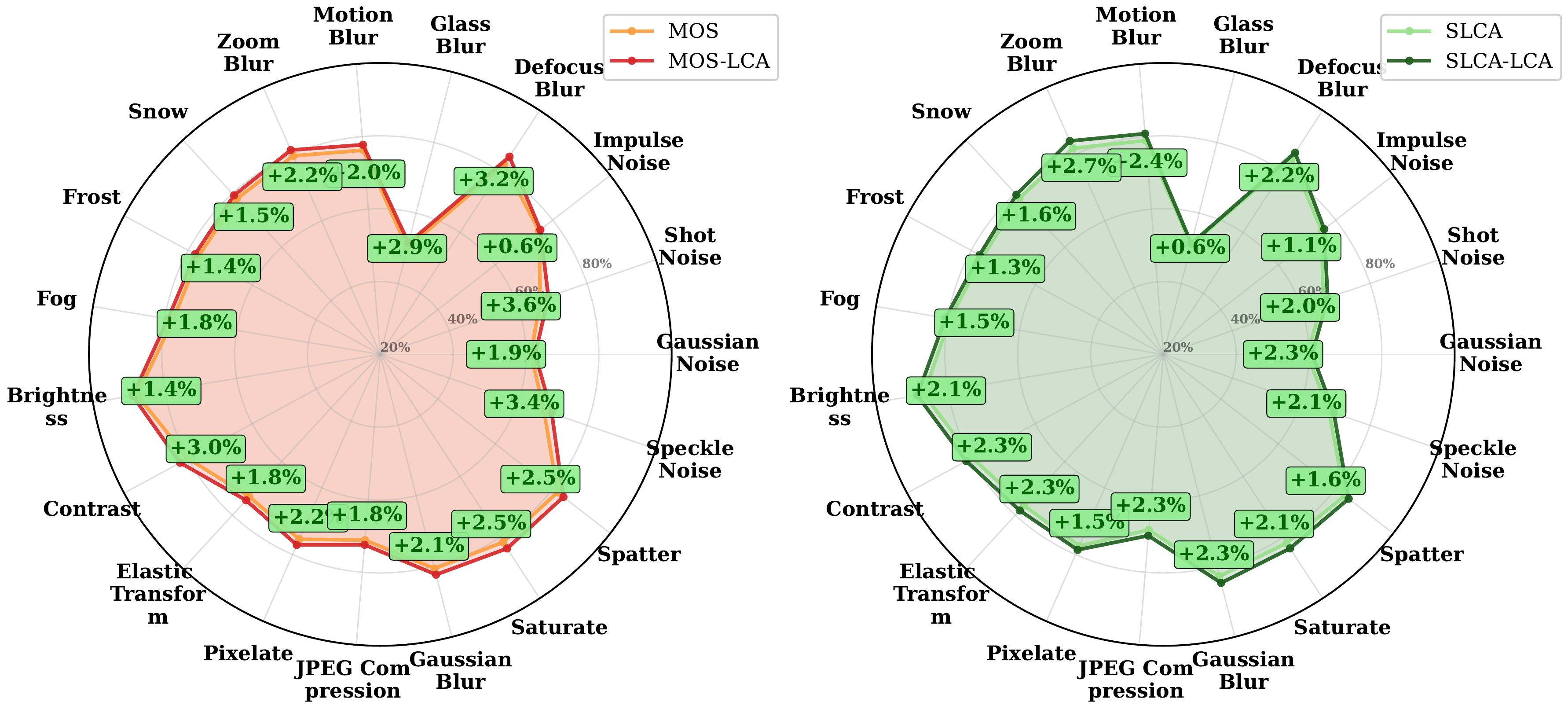} 
\end{center} 
\caption{Accuracy of the original methods and their LCA variants on CIFAR100-C corruptions.}
\label{fig:ablation-attach-lca-cifar100c} 
\end{figure}

\begin{figure}[t] 
\begin{center} 
\includegraphics[width=\linewidth]{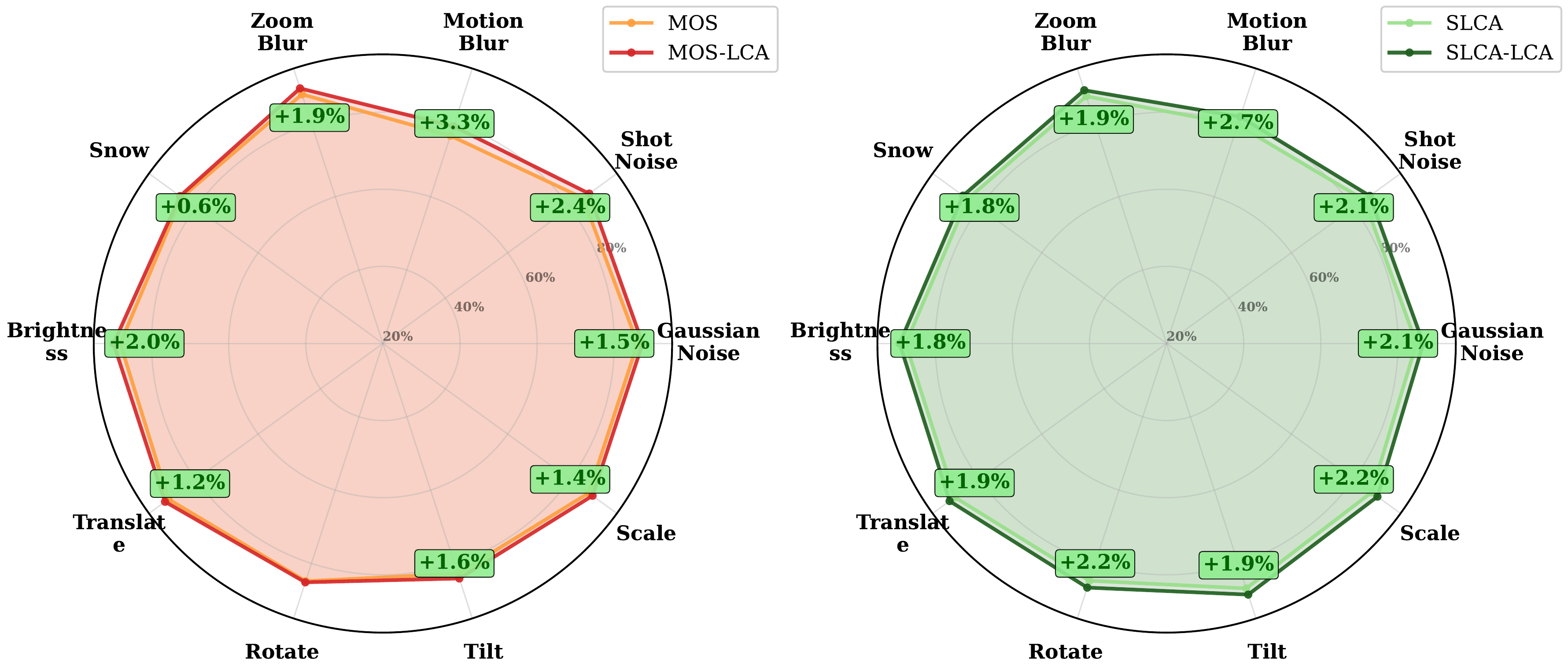} 
\end{center} 
\caption{Accuracy of the original methods and their LCA variants on CIFAR100-P perturbations.}
\label{fig:ablation-attach-lca-cifar100p} 
\end{figure}

In addition to the results reported in Figure~\ref{fig:ablation-mos-slca}, where we replace the original alignment loss with our LCA loss, we conduct a more extensive ablation to evaluate the adaptability of LCA when attached to other alignment-based methods, specifically SLCA \citep{zhang2023slca} and MOS \citep{sun2025mos}. Both methods continually evolve the backbone representations over time, making them natural candidates for incorporating our proposed loss.
Figure~\ref{fig:ablation-attach-lca-overall} shows that, across all datasets, the LCA-augmented variants consistently outperform their corresponding original methods, indicating a clear accuracy improvement.

We further evaluate robustness following the same protocol used for IM. Specifically, we test on CIFAR100-C, the corrupted version of CIFAR100, and CIFAR100-P, the perturbed version. Details of these robustness benchmarks are provided in Section~\ref{sec:robustness-measurement}.
Figure~\ref{fig:ablation-attach-lca-robustness} summarizes the overall robustness gains, while Figures~\ref{fig:ablation-attach-lca-cifar100c} and \ref{fig:ablation-attach-lca-cifar100p} break down the improvements for each corruption and perturbation type, respectively.

\section{Effect of LCA when selectively updating classifiers}
\begin{figure}[t] 
\begin{center} 
\includegraphics[width=\linewidth]{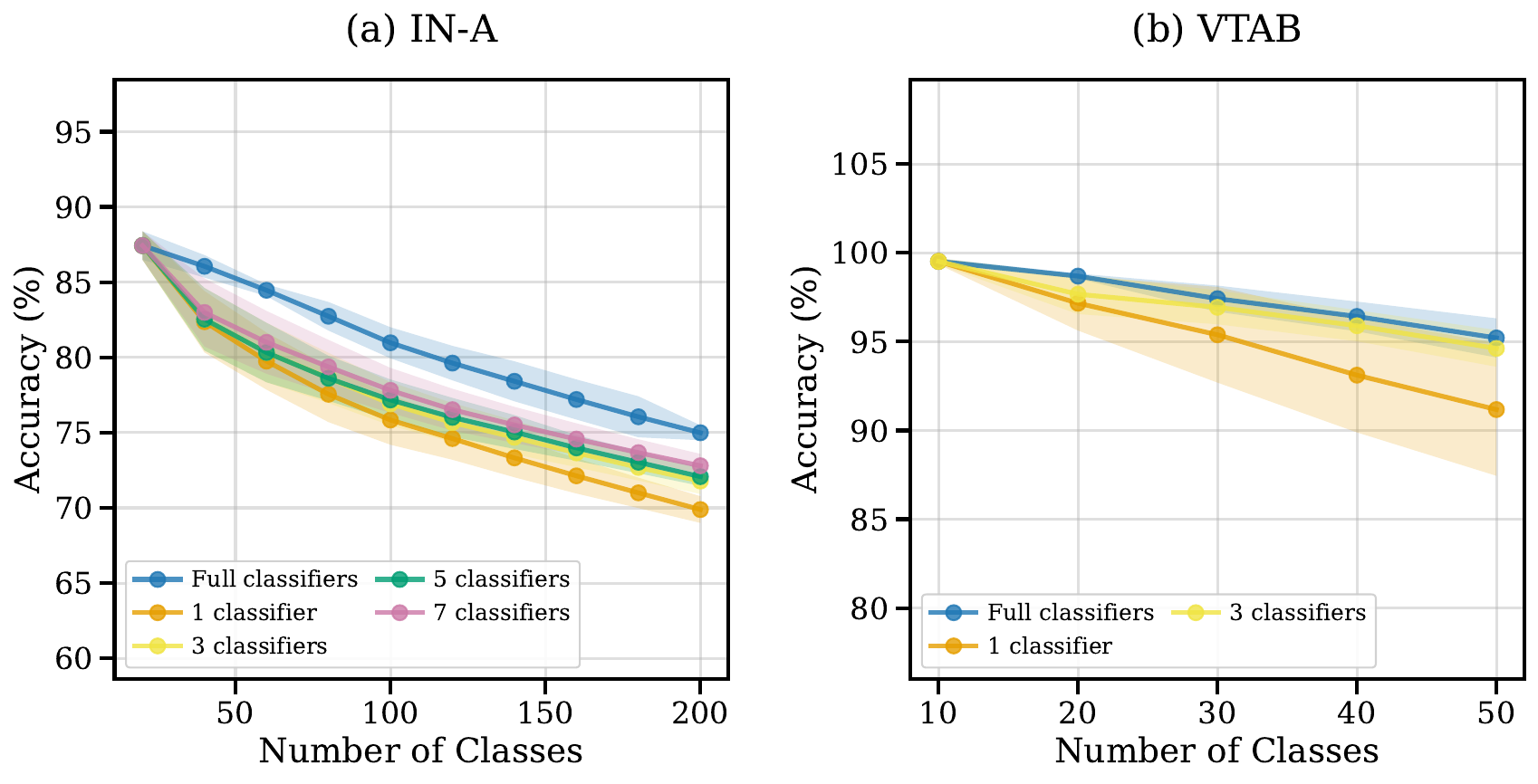} 
\end{center} 
\caption{Accuracy of selectively finetuning a subset of classifiers.}
\label{fig:ablation-selective-finetune-classifiers} 
\end{figure}

Updating only half of the task-specific classifier heads (five in IN-A and three in VTAB) during alignment still yields competitive performance as demonstrated in figure~\ref{fig:ablation-selective-finetune-classifiers}, indicating that full classifier finetuning is not always necessary.

\end{document}

%% file: math_commands.tex
\usepackage{amsmath,amsfonts,bm}

\def\eqref#1{equation~\ref{#1}}

\def\1{\bm{1}}

\def\vs{{\bm{s}}}

\def\vz{{\bm{z}}}

\def\mD{{\bm{D}}}

\DeclareMathAlphabet{\mathsfit}{\encodingdefault}{\sfdefault}{m}{sl}
\SetMathAlphabet{\mathsfit}{bold}{\encodingdefault}{\sfdefault}{bx}{n}

\def\gN{{\mathcal{N}}}

\def\gZ{{\mathcal{Z}}}

\newcommand{\E}{\mathbb{E}}

